\newtheorem{theorem}{Theorem}
\newtheorem{lemma}{Lemma}
\theoremstyle{definition}
\def\cA{\mathcal A}
\def\cJ{\mathcal J}
\def\cM{\mathcal M}
\def\cN{\mathcal N}
\def\cS{\mathcal S}
\def\cY{\mathcal Y}
\def\cX{\mathcal X}
\def\RR{\mathbb R}
\def\RR{\mathbb R}
\def\bx{\mathbf{x}}
\newcommand{\raf}[1]{(\ref{#1})}
\begin{document}
%
\title{Multi-Agent Chance-Constrained Stochastic Shortest Path with Application to Risk-Aware Intelligent Intersection}

%

\author{Majid Khonji*, Rashid Alyassi*, Wolfgang Merkt, Areg Karapetyan, 
 Xin Huang, Sungkweon Hong, Jorge Dias, and~Brian Williams 
 \thanks{*These authors contributed equally.}
\thanks{This work was supported by the Khalifa University of Science and Technology under Award Ref. CIRA-2020-286.}
\thanks{M. Khonji, R. Alyassi, A. Karapetyan and J. Dias are with the EECS Department, Khalifa University, Abu Dhabi, UAE. (e-mails: \{majid.khonji, rashid.alyassi, areg.karapetyan, jorge.dias\}@ku.ac.ae)}
\thanks{W. Merkt is with the Oxford Robotics Institute, University of Oxford. (e-mail: wolfgang@robots.ox.ac.uk)}
\thanks{X. Huang, S. Hong and B. Williams are with CSAIL, Massachusetts Institute of Technology, Cambridge, MA, USA. (e-mails: \{huangxin, sk5050, williams\}@mit.edu)}}

\markboth{} 
{Shell \MakeLowercase{\textit{et al.}}: Multi-Agent Chance-Constrained Stochastic Shortest Path with Application to Risk-Aware Intelligent Intersection}

%



\maketitle

\IEEEpeerreviewmaketitle

\begin{abstract}
In transportation networks, where traffic lights have traditionally been used for vehicle coordination, intersections act as natural bottlenecks. A formidable challenge for existing automated intersections lies in detecting and reasoning about uncertainty from the operating environment and human-driven vehicles. In this paper, we propose a risk-aware intelligent intersection system for autonomous vehicles (AVs) as well as human-driven vehicles (HVs). We cast the problem as a novel class of Multi-agent Chance-Constrained Stochastic Shortest Path (MCC-SSP) problems and devise an exact Integer Linear Programming (ILP) formulation that is scalable in the number of agents' interaction points (e.g., potential collision points at the intersection). In particular, when the number of agents within an interaction point is small, which is often the case in intersections, the ILP has a polynomial number of variables and constraints. To further improve the running time performance, we show that the collision risk computation can be performed offline. Additionally, a trajectory optimization workflow is provided to generate risk-aware trajectories for any given intersection. The proposed framework is implemented in CARLA simulator and evaluated under a fully autonomous intersection with AVs only as well as in a hybrid setup with a signalized intersection for HVs and an intelligent scheme for AVs. As verified via simulations, the featured approach improves intersection's efficiency by up to $200\%$ while also conforming to the specified tunable risk threshold.


\end{abstract}


\begin{IEEEkeywords}
Intelligent Intersection, Autonomous Vehicles, Risk-aware Motion Planning,  Multi-agent Systems.
\end{IEEEkeywords}

\section{Introduction}
Existing vehicle coordination methods, which are designed primarily for human drivers, tend to fall short in leveraging the increased sensitivity and precision of autonomous vehicles (AVs). With the progress of self-driving technologies, the bottleneck of roadway efficiency will no longer be attributed to drivers but rather to the automation scheme underpinning the coordination of AVs' actions. A crucial challenge for these schemes lies in detecting and reasoning about uncertainties in the operating environment. In urban scenarios, uncertainty arises predominantly\footnote{Other sources of uncertainty could stem from vehicle perception and trajectory tracking error due to road and weather conditions.} from human-driven vehicles (HVs) as their intention could exhibit a stochastic and oftentimes risky behavior. Unlike streets/highways with well-delimited lanes, intersections typically lack clear marking, thereby creating ``conflict zones'' with elevated potential for crashes. In fact, as reported in~\cite{grembek2018introducing}, 40\% of all crashes and 20\% of fatalities happen to occur at intersections.

\begin{figure*}[t!]
	\begin{subfigure}{.254\textwidth}
		\centering
     \includegraphics[trim={2.3cm 0 2.3cm 0},clip, width=\textwidth]{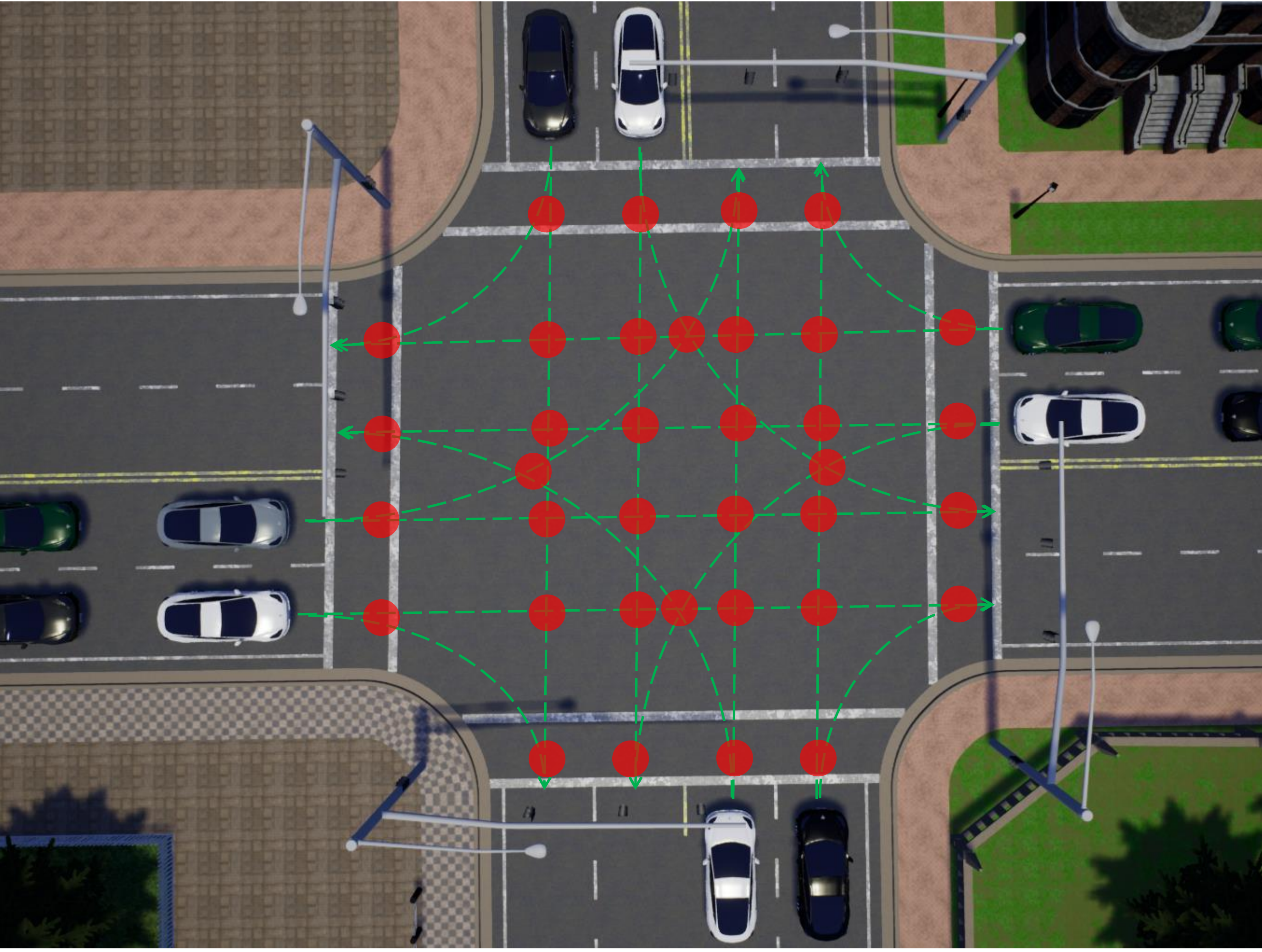}
     \caption{}
     \label{fig:colide-points}
     \end{subfigure}
	\begin{subfigure}{.24\textwidth}
		\centering
		\includegraphics[width=\textwidth]{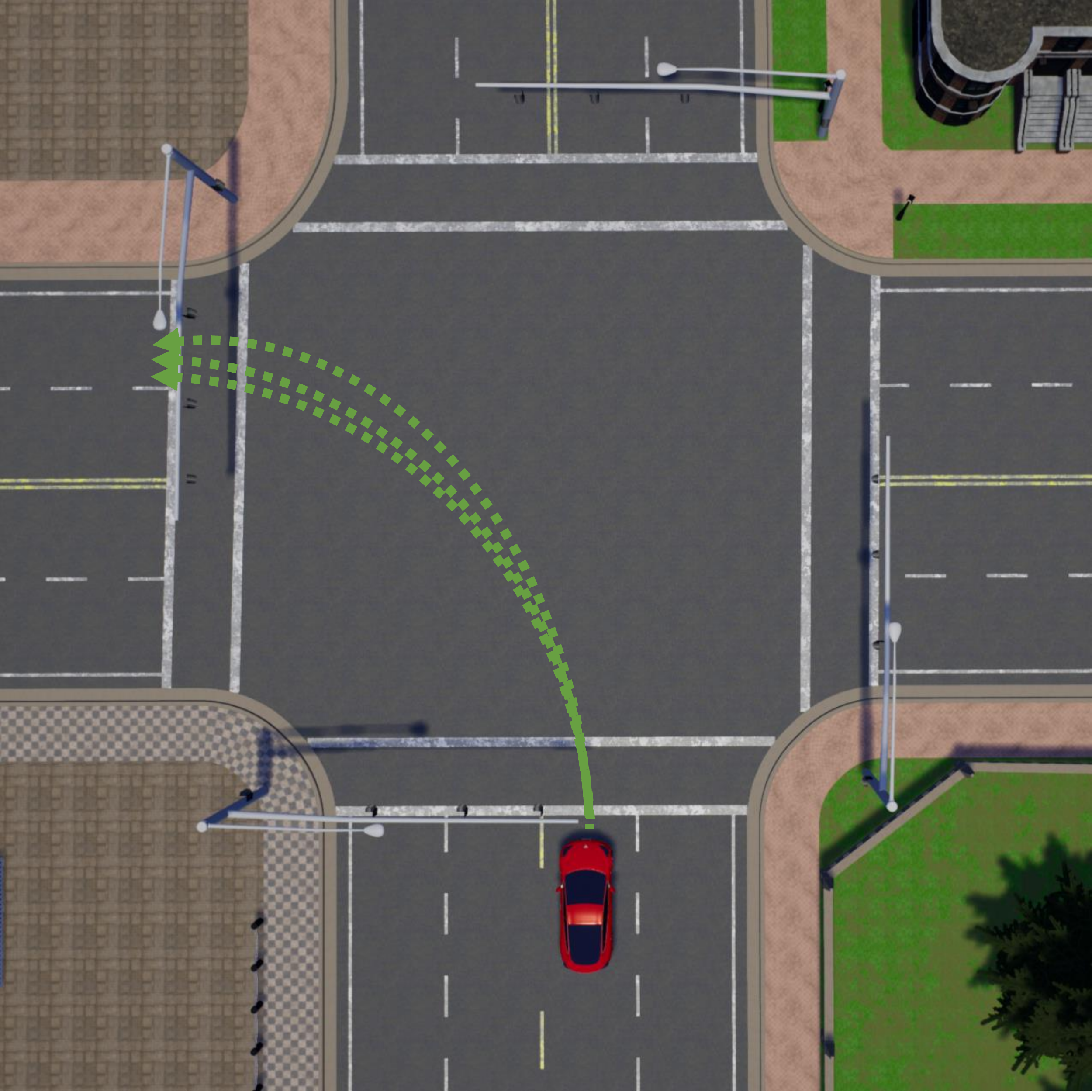}
		\caption{}\label{fig:test2}
	\end{subfigure}
	\begin{subfigure}{.24\textwidth}
		\centering
		\includegraphics[width=\textwidth]{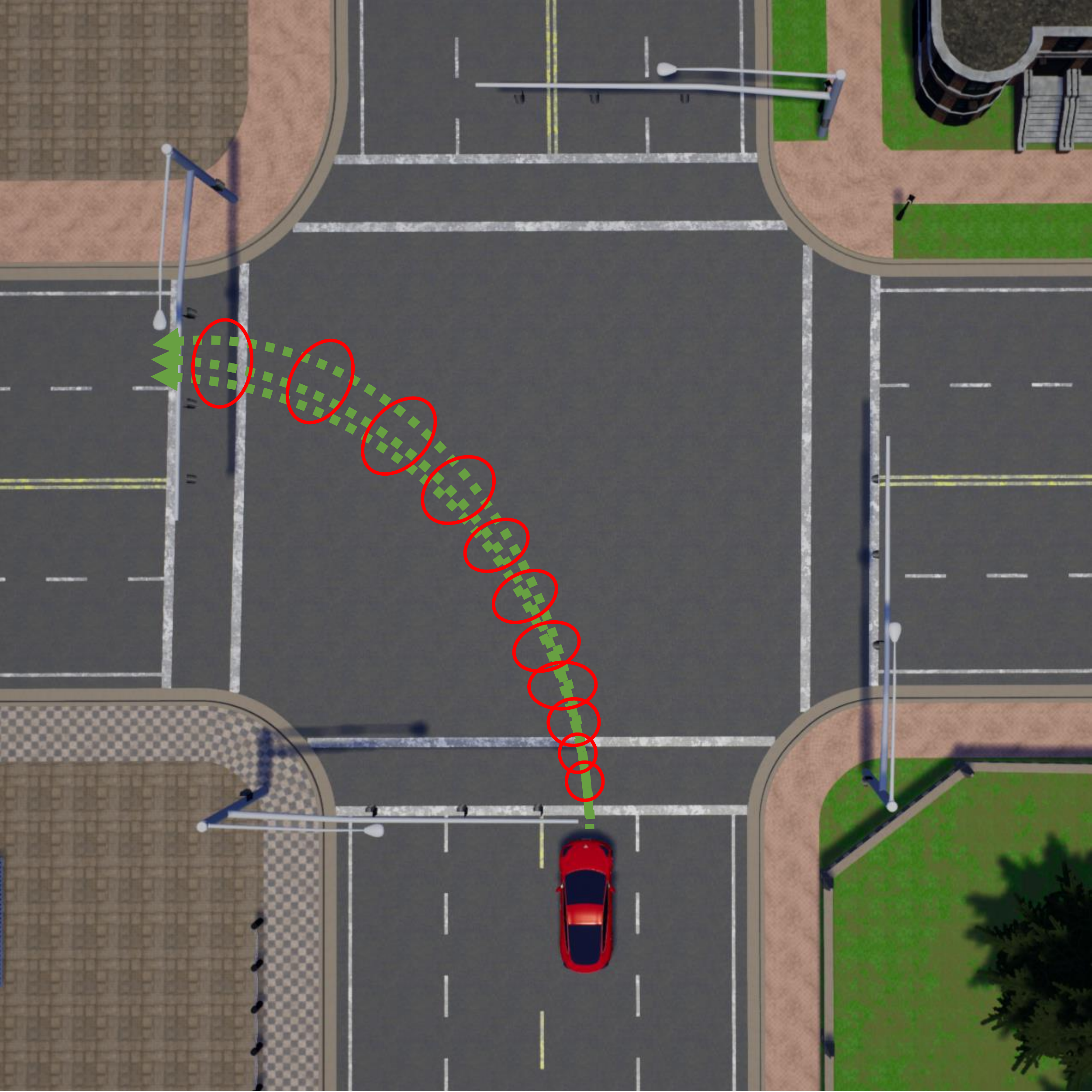}
		\caption{}\label{fig:pft}
	\end{subfigure}
	\begin{subfigure}{.24\textwidth}
		\centering
		\includegraphics[width=\textwidth]{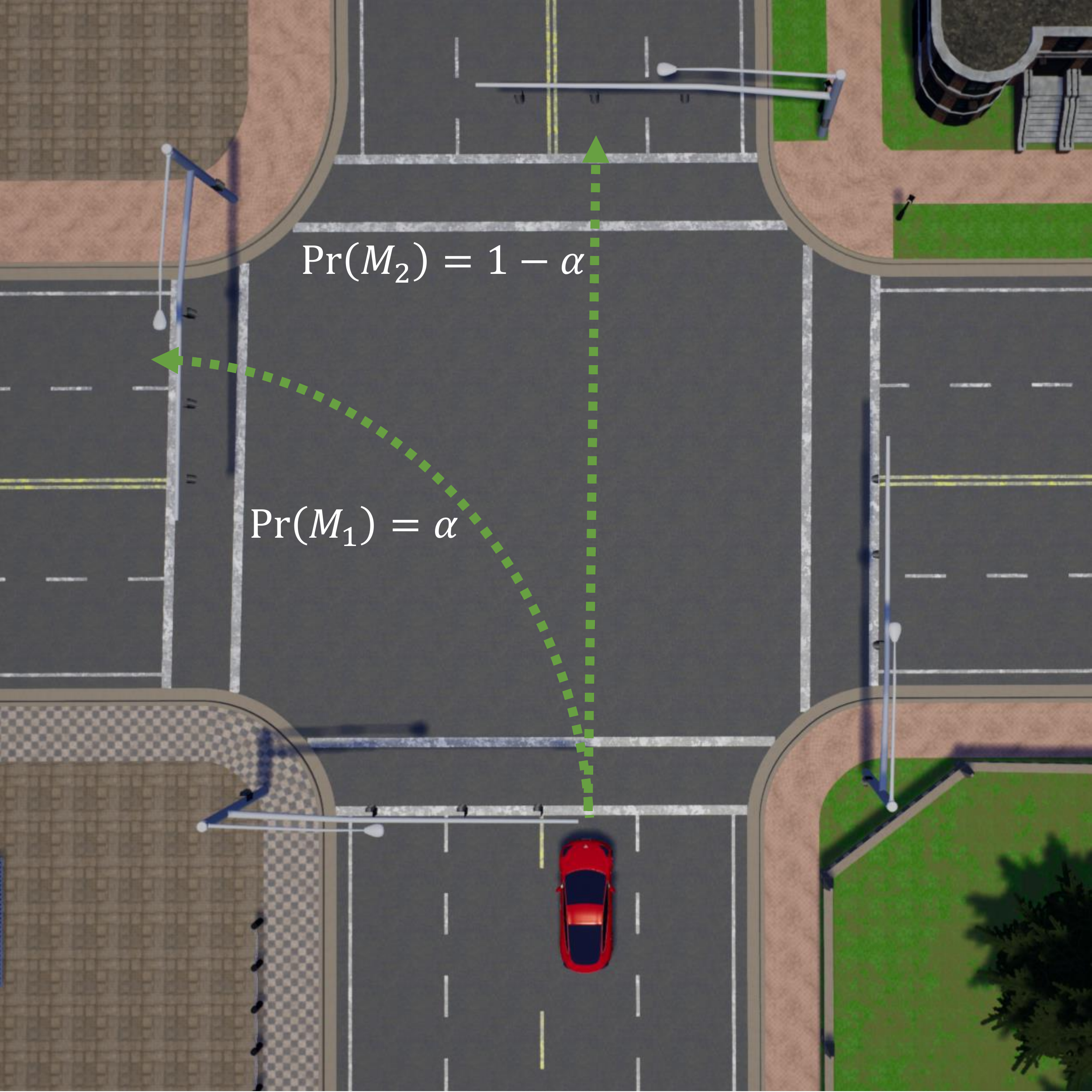}
		\caption{}\label{fig:intent}
	\end{subfigure}
	\caption{(a) The set of interaction points (collision points) between agents at an intersection. 
	(b) Multiple trajectory generation for the same maneuver. (c) PFT representation of a maneuver with tracking error. (d) Intent recognition output for an HV.}
\end{figure*}

These uncertainties can be mitigated by a \textit{risk-aware Intelligent Intersection} system in which AVs' actions are judiciously planned and coordinated in a centralized or decentralized fashion. While coordination can be achieved either way, the former approach is less prone to communication overheads and packet loss, does not suffer from synchronization issues and offers enhanced system-wide controllability, hence the focus of the present work on the centralized scheme. In line with this rationale, Dresner et al. \cite{v2iintersect} developed a protocol for multi-vehicle coordination via a centralized controller relying on {\em Vehicle-to-Intersection} (V2I) communication network~\cite{v2i}. The protocol employs a reservation mechanism wherein AVs declare their destinations to the controller and wait until permission is granted. The controller provides each vehicle with trajectory details, modeled as a path on an abstracted grid map, as well as destination speed. The work is extended in~\cite{au2010motion} with a refined trajectory representation considering vehicle kinematics. To avoid collisions, a grid-based collision map is constructed with inflated grid cells as a buffer. However, tuning cell size heuristically yields more conservative behavior when the cell is large and deteriorated performance when small, with no clear relationship to the actual probability of collision. Furthermore, the coordination mechanisms in~\cite{au2010motion,v2iintersect} rest on the {\em First-come-first-serve} (FCFS) principle, which \textit{lacks guarantees} on the global optimality of  the attained objective value (e.g., maximum throughput).

Different from existing centralized planners for optimizing intersection management (e.g., \cite{au2010motion,v2iintersect,zhao2021bilevel, ahn2016semi}), we develop a \textit{chance-constrained model} which alongside optimal control \textit{ensures} that collision probability remains within prescribed limits despite the imposed uncertainty. In planning under uncertainty, the Multi-agent Markov Decision Process (MMDP), which extends the classical MDP \cite{howard1960dynamic} to multiple agents, is a well-established mathematical formalism~\cite{boutilier1999sequential}. A special class of MDPs with non-negative utilities is known as the Stochastic Shortest Path (SSP) \cite{bertsekas1991analysis} (a.k.a. Stochastic Longest Path). Notoriously, MMDPs suffer from an exponential joint action space and a state space that is typically exponential in the number of agents. Notable research efforts have been directed towards exploiting agents' interactions to facilitate the problem's tractability (see, e.g., \cite{becker2004decentralized,spaan2008local}). Of particular interest, we consider MMDPs with {\em local interactions} \cite{melo2011decentralized,scharpff2016solving}, where agents have to coordinate their actions \textit{only at certain} interaction zones. We explore such an interaction scheme within an SSP framework with additional constraints that capture the probability of failure (e.g., collision) under several risk criteria. In particular, as exemplified in Fig.~\ref{fig:colide-points}, the studied intersection is modelled as a collection of finite ``interaction'' points where vehicles (i.e., agents) are likely to collide. However, contrary to prior methods which encode failure as a negative penalty \cite{scharpff2016solving,melo2011decentralized}, we bound the probability of collision by a preset threshold. This allows for a more versatile representation of safety requirements and has been adopted recently in the literature for the single AV scenario \cite{huang2018hybrid,huang2019online}. We remark that the solution techniques for MMDPs provided in \cite{melo2011decentralized,scharpff2016solving} are not amenable to these newly introduced constraints.

To further the design of safe and efficient traffic management programs, the present study proposes a risk-aware Intelligent Intersection system for AVs and HVs that, in spite of possible uncertainties, maximizes intersection's throughput without infringing the desired risk tolerance level. More concretely, the contributions and roadmap of this paper can be summarized as follows: 

    \begin{enumerate}
        \item In Sec.~\ref{sec:system}, we lay out the architecture of the proposed Intelligent Intersection framework where AVs are proctored by a centralized controller whereas HVs follow traffic light signals. As illustrated in Figs.~\ref{fig:test2} and~\ref{fig:intent}, the system is augmented with a \textit{probabilistic intent detector} as well as \textit{robust motion model generator} for HVs, allowing to cater for real-world nuances (e.g., imperfect trajectory estimation, ambiguity in drivers' decisions).  
        \item In Sec.~\ref{sec:ccssp}, we formulate a novel Multi-agent Chance-Constrained Stochastic Shortest Path (MCC-SSP) model with local interactions and devise \textit{an exact solution method} based on Integer Linear Programming (ILP). Through rigorous analytical scrutiny, the probabilistic constraints in MCC-SSP are proved reducible to equivalent linear ones in ILP (Theorem~\ref{lem1}). More importantly, the ILP formulation features \textit{polynomial number} of variables and constraints when the number of agents per interaction is small, thereby \textit{improving upon the state-of-the-art} designed for the single agent case~\cite{alyassi2021dual, sungkweon2021ccssp}.

        \item Drawing on MCC-SSP formalism, Sec.~\ref{sec:model} develops a conditional planner that enables AVs to react to potential contingencies (e.g., an unexpected maneuver from an HV). The planner supplies AVs with {\em safe} (w.r.t. permissible risk limit) contingency plans -- a maneuver for every possible scenario. A hybrid risk calculation method is employed, permitting the computations to be carried out offline in most cases. Subsequently, Sec.~\ref{sec:traj_optimization} models the reference trajectories via multi-variate Gaussian processes known as Probabilistic Flow Tubes (PFT)~\cite{pft} (see Fig.~\ref{fig:pft}) and presents a computationally  efficient means of estimating their collision probabilities.
        \item Lastly, Sec.~\ref{sec:experiment} validates the effectiveness and practicality of the proposed risk-aware intelligent intersection system through a series of simulations. Specifically, taking the classical grid problem as a case study we first demonstrate the \textit{scalability} of the introduced ILP formulation against the number of agents and horizons as well as its \textit{invariance} to the number of states. Next, we investigate the proposed planner's computational feasibility and contrast its performance with that of common approaches: FCFS and standard signalized scheme. As simulations indicate, the featured planner outperforms the two benchmarks by up to a factor of 2 in maximizing the intersection's throughput and supports rapid planning for multiple horizons ($> 1Hz$).   
        
    \end{enumerate}



\section{Related Work}\label{sec:related}

 
Parallel to the aforementioned centralized planners, a separate line of research has been devoted to developing decentralized intersection management schemes resting on {\em Vehicle-to-Vehicle} (V2V) communication. For instance, the works in~\cite{carlino2013auction, bashiri2017platoon} present game-theoretic decentralized approaches in the context of platooning so as to optimize traffic flow under several scenarios, including intersections. Chandra et.al.~\cite{chandra2022gameplan} propose a game-theoretic approach for unsignaled intersections, where priory is defined based on the driver's behavior (aggressive having higher priority).
Zhang et.al.~\cite{zhang2021priority} present a decentralized priority-based intersection system for cooperative AVs. The system incorporates three levels of planning; the first level is based on FCFS for vehicles at the intersection zone, the second considers AVs followed by emergency AVs arriving at the intersection (far zone) by scheduling their arrival time accordingly, and the third level is for the lowest priorities. However, unlike their centralized counterparts, decentralized schemes could inflict diminished system-wide controllability and efficiency, let alone communication overheads. In this work, we focus on the centralized case, with V2I communication, allowing the system to achieve better overall global optimality. 

An essential component of autonomous vehicle planning is uncertainty estimation in the environment. In particular, we are interested in tactical-decision making in the presence of uncertainty in HVs' intentions \cite{bandyopadhyay2013intention} and tracking error, where AVs may not be able to follow precisely a reference trajectory, mainly due to control uncertainty. Also, AVs from different vendors may run different controllers, hence have trajectory variations from the same reference trajectory. To cope with uncertainty in AVs' tactical decision-making, several works in the literature model the problem as Markov decision process variants. Brechtel et al. \cite{brechtel2014probabilistic} model the decision-making problem as a continuous state partially observable MDP (POMDP), where observation stochasticity resembles perception noise (e.g., LiDAR point cloud noise). Hong et al. \cite{sungkweon2021ccssp} model intention recognition of human-driven vehicles as {\em hidden} state variables in POMDP. 
Besides optimizing an objective function in POMDPs, \cite{huang2018hybrid,huang2019online} consider an additional chance constraint that bounds the collision probability below a safety threshold. 
Arguably, with a properly modeled state space, as we illustrate in this work, the driving problem can be modeled as a fully observable chance-constrained MDP, which is more scalable than the POMDP counterpart. 

 MDP \cite{howard1960dynamic} is a widely used model for planning under uncertainty. 
 Moreover, the problem has a dual linear programming (LP) formulation \cite{d1963probabilistic} which solves the problem as a minimum cost flow problem also known as the Stochastic Shortest Path (SSP) problem. 
 A special class of MDP optimizes an objective function while also bounding the probability of constraint violations is often called {\em chance-constrained} MDP (CC-MDP) \cite{dolgov2003approximating} which is an NP-Hard problem. To reduce the problem, an approximation of the chance constraint using Markov's inequality was proposed by  \cite{dolgov2003approximating}, effectively converting the problem into an MDP with a secondary {\em cost function}, called constrained MDP (C-MDP). Another approach \cite{de2017bounding} applies Hoeffding's inequality to improve the approximation. Both methods provide conservative policies with respect to safety thresholds. Exact methods for solving CC-MDP rely on those used for the partially observable MDPs (CC-POMDP) \cite{santana2016rao,khonji2019approximability}. However, even for a single agent, such methods suffer from scalability. They require full history enumeration in the worst case, which makes the solution space exponentially large with respect to the planning horizon \cite{sungkweon2021ccssp}. To the best of our knowledge, our technique, even for the single-agent case, is the first {\em exact} method for solving CC-MDPs that does not require history enumeration.  Also, our method extends the approximate method \cite{de2017bounding}, which considers independent agents with a shared risk budget, to situations where agents can interact at specific locations.


\section{Intelligent Intersection System} \label{sec:system}
 	The intersection system unders tudy considers a risk-aware architecture \cite{khonji2020risk} for both AVs and HVs, with the former following a centralized controller and HVs adhering to traffic signals. We adopt the protocol specified in \cite{v2iintersect} for AVs, where each vehicle declares a target destination to the coordinator via V2I communication. The coordinator is a computing unit installed on a road side unit equipped with communication and sensing modalities. Given vehicles' target destinations, the coordinator optimizes the intersection's overall performance and transmits a trajectory specification to each vehicle. The coordinator also predicts the intention of HVs and incorporates the uncertainty in the AV plans. Overall, the proposed system comprises six modules, as depicted in Fig. \ref{fig:high-level}, which are elaborated in the paragraphs to follow.

      \begin{figure}[!t]
    \centering 
    \includegraphics[width=\linewidth]{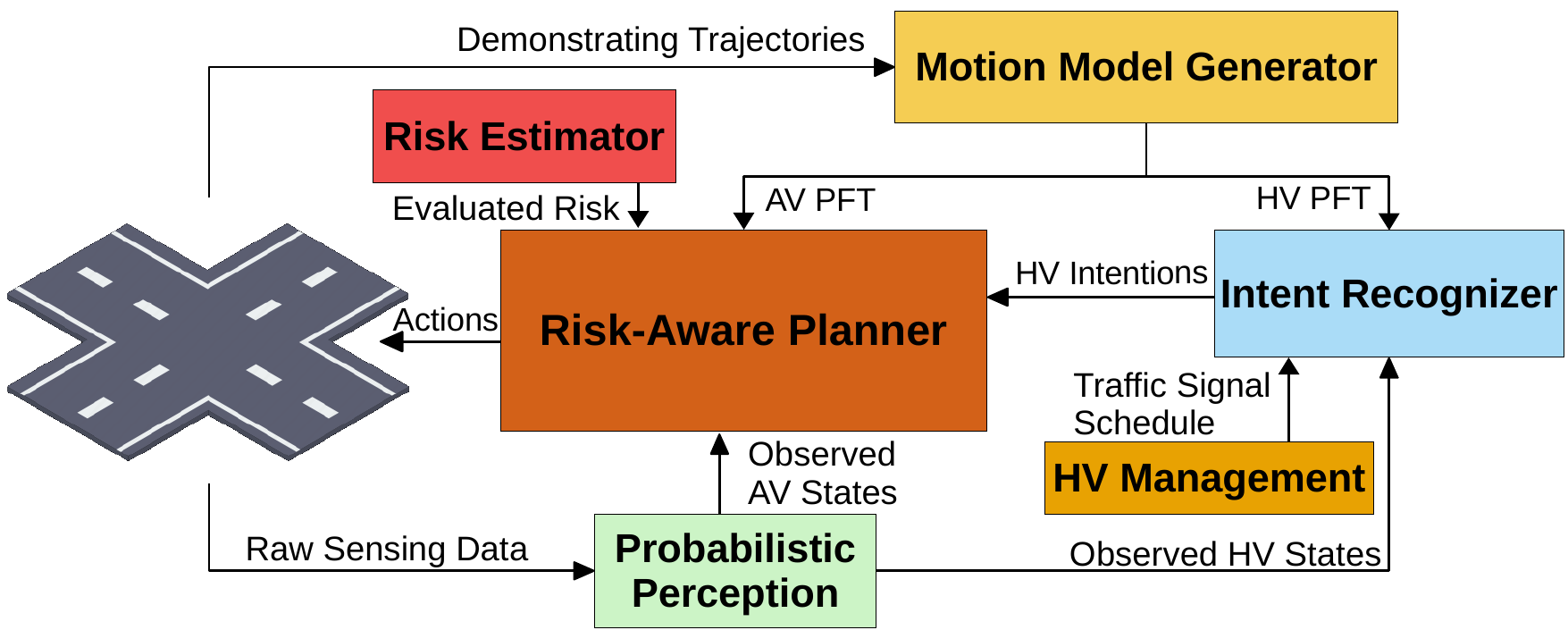}
 	\caption{High-level architecture of the featured Intelligent Intersection system.} 	\label{fig:high-level}
 	\end{figure}
    
    \vspace{2pt}
    \noindent{\textit{\bfseries Probabilistic Perception}:}
    Perception uncertainty can be specified as a distribution in which the object is located. One way to obtain a distribution over the object's shape and location is through point clouds. The points are segmented according to the face-plane of the obstacle they belong to; points belonging to the same face can be used to perform a Bayesian linear regression on the plane's parameters. Given a Gaussian prior on the face-plane parameters and under reasonable assumptions about the underlying noise-generation process, the posterior on the face-plane parameters will also be Gaussian \cite{axelrod2018provably}.
    
    A recent work in \cite{senanayake2018automorphing} presents a fast algorithm that generates a probabilistic occupancy model for dynamic obstacles in the scene with few sparse LIDAR measurements. Typically, the occupancy states exhibit highly nonlinear patterns that cannot be captured with a simple linear classification model. Therefore, deep learning models and kernel-based models can be considered as potential candidates.
    This module obtains a two-dimensional top-down representation of all vehicles in the scene along with uncertainty in their locations. For AVs, accurate location and pose (along with Gaussian uncertainty in location, which could be estimated via Kalman filter) can be transmitted through V2I~\cite{hartenstein2009vanet}. Moreover, infrastructure sensors, such as cameras, can obtain HVs poses and validate AV pose estimations via segmentation methods \cite{khan2013image}.

    \vspace{2pt}
    \noindent{\textit{\bfseries Motion Model Generator}:} The system continuously elicits trajectory traversal data of both AVs and HVs to learn probabilistic motion models in an online (real-time) fashion. These models are helpful in two ways: (i) for AVs, the learned motion models capture the uncertainty in the control system (ii) for HVs, the learned motion models represent how humans drive according to different driving patterns associated with tracking uncertainties.
    To encode uncertainty in trajectory traversal, we leverage a probabilistic representation, called Probabilistic Flow Tube (PFT) \cite{pft}, that encodes a nominal trajectory and uncertainties for a given driving pattern.
    The PFT is learned from a set of demonstrating trajectories, where each trajectory is composed of a sequence of positions. The output is a  Gaussian process, a sequence of means and covariances that represents a nominal trajectory and uncertainties in traversals. We refer to  \cite{pft} for further details on PFTs. The generation of the initial trajectory for AVs is described in Sec.~\ref{sec:traj_optimization}. 

    Existing works \cite{pft,huang2019online} assume the demonstrating trajectories are pre-labeled with maneuver types and learn a PFT for each maneuver type, which entails extra labeling effort and confines to a fixed taxonomy of maneuvers. Instead, we leverage an unsupervised clustering algorithm to create distinct clusters \cite{sung2012trajectory} given demonstrating trajectories representing different driving patterns and learn a distinct PFT for each cluster. As we continuously acquire more trajectories from the perception system, we update the clusters when the variance of a PFT is excessively high.    We visualize an example of PFT and its demonstrating trajectories in Figs.~\ref{fig:pft} and \ref{fig:test2}, respectively. Since the demonstrating trajectories are collected with different sizes, we use dynamic time warping (DTW) \cite{myers1980performance} to ensure equally sized trajectories prior to applying the PFT generator. 
 Then, PFTs are computed by sampling multiple trajectories, each obtained by following a nominal trajectory using a PID controller with slightly perturbed parameters to emulate manufacturer-specific controller deviations. 

    \vspace{2pt}
    \noindent{\textit{\bfseries HV Management}:}
    Considering that the intersection system lacks communication means with HVs, we assume HVs follow traffic signals and the employed intent recognition module predicts the trajectory of HVs to generate safe plans for AVs. For instance, if the traffic light is green on a given road, only HVs from that road may enter the intersection immediately. In contrast, AVs from any road may enter the intersection upon the system's command.

    \noindent{\textit{\bfseries Intent Recognizer}:} This module predicts human driver's intention as a distribution over a discrete set of candidate PFTs, thereby allowing AVs to maintain contingency plans for all potential scenarios. This is a departure from current approaches in the literature, where an AV is often provided with a single trajectory, whereas we provide a response trajectory \textit{for each potential scenario}. Contingency planning enables fast response to risky scenarios. The set of candidate PFTs are elected based on road policies that determine the set of legal maneuvers, each with its corresponding PFT.
    Intent recognition, in its own field, has been extensively studied and there are more sophisticated learning-based methods, such as \cite{huang2019uncertainty,ma2019trafficpredict}, that can produce accurate motion predictions conditioned on more detailed priors. 
    
    Given the tracked trajectory points of HVs, the module predicts the set of PFTs that the human driver is likely to follow through Bayesian filtering.
    First, we supplement the tracked vehicle trajectory with more data points based on polynomial fit, 
    then extend it into future horizons based on the polynomial coefficients to obtain an augmented trajectory. The augmentation allows us to provide sufficient data in case of short observation tracks.
    Second, we compute the observation probability of the augmented trajectory conditioned on each candidate PFT. The observation probability is multiplied by a prior distribution over PFT probabilities to obtain a posterior distribution of a candidate PFT the driver intends to follow. 
    The result is a distribution over a set of candidate PFTs, which allows us to leverage a risk-aware planner for each contingency, as explained in the following sections.
    
     Given a prior of all possible PFTs, the module computes the likelihood of the tracked trajectory against each PFT and updates the posterior distribution over discrete PFT choices. The future trajectory points are then predicted given the nominal trajectory and uncertainties associated with each PFT.

    \vspace{2pt}
    \noindent{\textit{\bfseries Risk-Aware Planner}:} 
    The planner takes as input the perception data, including observed AV states, PFT motion models learned for AVs, intentions of HVs, and outputs contingency plans for each AV in the scene. In Sec.~\ref{sec:model}, we formulate the intersection problem as an MCC-SSP and devise \textit{an exact} ILP-based solution approach.

\vspace{2pt}
\noindent{\textit{\bfseries Risk Estimator}:}
The online nature of the system and the high number of queries from the planner call for a computationally lightweight risk calculation method. Since PFTs are multi-dimensional Gaussian processes, collision at a given time amounts to solving multi-variate integrations. 
A straightforward alternative to the exact risk computation method is a high-resolution Monte Carlo sampling. Moreover, in Sec.~\ref{subsec:risk_precomp} we show how to compute the risk offline for most cases.

\section{Multi-Agent Chance-Constrained Stochastic Shortest Path} \label{sec:ccssp}
In this section, we formally define the fixed-horizon Multi-agent Chance-Constrained Stochastic Shortest Path (MCC-SSP) model.\footnote{Although the paper's results extend to multi-agent CC-MDP, we emphasize the stochastic shortest path variant as we believe that non-negative utility values fully capture relevant applications. We also want to discourage using negative values in the objective to penalize risk since chance constraints provide a more natural approach to risk representation.}

\subsection{Problem Definition}


\subsubsection{Agent Model}
We are given a set of agents $\cX$ (e.g., vehicles), each following a Markov decision process model 
$M^v=\langle \cS^v, \cA^v, T^v,U^v, s_0^v\rangle, v \in \cX,$ where
\noindent
\begin{itemize}
\item  $\cS^v$ and $\cA^v$ are finite sets of {states} and {actions} for agent $v$, respectively.
\item $T^v: \cS^v \times\cA^v\times \cS^v \rightarrow [0,1]$ is a probabilistic {transition function} between  states, 
$T^v(s^v,a^v,\bar s^{v}) = \Pr(\bar s^{v} \mid a^v,s^v)$, where $s^v,\bar s^{v} \in \cS^v$ and $a^v\in \cA^v$.
\item $U^v: \cS^v \times \cA^v\rightarrow \RR_+$ is a non-negative {utility function}.
\item $s^v_0$ is an initial state of agent $v\in \cX$.
\end{itemize}

\subsubsection{MCC-SSP Model}
The examined model considers situations where agents interact only at certain {\em interaction points} (see Fig.~\ref{fig:colide-points} for an illustration), indexed by a set $\cN$.  Each point $i\in \cN$ is in charge of coordination among multiple agents denoted by subset $\cX^i \subseteq \cX$.  We assume that every agent is assigned to at least one coordination point (i.e., $\cX = \cup_{i\in \cN} \cX^i$). 
Such interactions among agents entail risk of failure (e.g., collision). We consider multiple risk criteria, indexed by a set $\cJ$. 
Formally, we define the 
MCC-SSP problem as a tuple 
$M\triangleq\langle \cX, \cN, \cJ, (\cS^i, \cA^i, T^i,U^i, s_0^i)_{i\in \cN}, h, r^j(v,v')_{v,v' \in \cX}, (\Delta^j)_{j\in \cJ} \rangle$ where,
\begin{itemize}
\item $\cX$ is the set of agents, $\cN$ is the set of interaction points, and $\cJ$ is the set of risk criteria.
$\cS^i\triangleq \times_{v\in \cX^i} \cS^v$ is a factored set of {\em interaction states} of the coordinated agents, and $\cA^i\triangleq\times_{v\in \cX^i} \cA^v$ is a factored set of joint {actions} for interaction $i\in\cN$, respectively. For $a^i\in\cA^i$, we write $a^i_v$ to denote the action of vehicle $v\in \cX^i$.
	\item
$T^i: \cS^i \times\cA^i\times \cS^i \rightarrow [0,1]$ is the joint transition function such that $T(s^i,a,\bar s^{i}) \triangleq \prod_{v\in\cX^i} T^v(s^v, a^v,{\bar s^v})$, where $\bar s$ is the next state. 
	\item
$U^i: \cS^i \times \cA^i\rightarrow \RR$ is the total utility such that $U^i(s^i, a^i)\triangleq \sum_{v \in \cX^i} U^v(s^v, a^v).$ When an agent belongs to multiple interaction points, then we count utility of that agent at one interaction only. In other words, if $v \in \bigcap_i \cX^i$ then there is only one interaction $i'$ such that $U^{i'}(s^{i'},a^{i'})\triangleq \sum_{v' \in \cX^{i'}} U^{v'}(s^{v'}, a^{v'})$ and the rest will have
$U^i(s^i,a^i)\triangleq \sum_{v' \in \cX^i\backslash \{v\}} U^{v'}(s^{v'}, a^{v'}).$	
	\item
$s^i_0\triangleq (s^v_0)_{v \in \cX^i} $ is the joint initial state.
	\item $h$ is the planning horizon. 
\item $r^j(v,v'): \cS^v\times \cS^{v'}  \rightarrow [0,1]$ provides the probability of failure (e.g., collision) due to interaction between agent $v$ and $v'$   at their respective states according to risk criterion $j$; and $\Delta^j$ is the corresponding risk budget, a threshold on the probability of failure over the planning horizon.
\end{itemize}
We represent the joint actions of all interaction points by $A \triangleq \times_{v \in \cX} \cA^v$, and joint states by $\cS \triangleq \times_v S^v.$ For convenience, we write $U(s, a)\triangleq \sum_{v \in \cX} U^v(s^v, a^v)$ to denote to the total utility of state $s\in \cS$ and action $a \in \cA$. 
A  {\em deterministic} and {\em non-stationary} policy $\pi(\cdot,\cdot)$ is a function that maps a state and time step into an action, $\pi:\cS\times \{0,1,...,h-1\} \rightarrow \cA$. A {\em stochastic} policy $\pi: \cS \times \{0,1,...,h-1\}\times \cA \rightarrow [0,1]$ is a probability distribution over actions from a given state and time.\footnote{To avoid clutter, we write $\pi(s_k, a)$ instead of $\pi(s_k, k,a)$ for state $s_k$.} 
We write  $\pi^i: \cS^i \rightarrow \cA^i$ to encode the joint action of agents $\cX^i$ at interaction $i\in \cN$, as per a feasible policy $\pi$. 
A {\em run} is a sequence of random joint states  $S_0, S_1,\ldots, S_{h-1}, S_h$ resulting from policy execution, where $S_0 = s_0\triangleq(s^v_0)_{v\in\cX}$ is known. We use superscript $i$ to denote the corresponding run with respect to interaction $i\in \cN$ (similarly we use superscript $v$ to that of agent $v$). 
Let $R^{j}(s^v, s^{v'})$ be a Bernoulli random variable for failure between $s^v\in \cS^v$ and $s^{v'} \in \cS^{v'}$ with respect to criterion $j\in \cJ$. As per MCC-SSP model $M$,  $\Pr(R^{j}(s^v, s^{v'}) = 1) = r^j(s^v, s^{v'})$.
The objective of {\sc MCC-SSP} is to compute a policy (or a conditional plan) $\pi$ that maximizes the cumulative expected utility (or minimizes the cumulative expected cost) while maintaining risks below  the given thresholds $\Delta^j$. Formally,
{\small{\begin{align}
\small{\textsc{(MCC-SSP)}} & \nonumber\\
\max_{\pi}& ~\mathbb{E} \Big[ \sum_{\mathclap{t=0}}^{h-1} U(S_{t}, \pi(S_{t}) )  \Big] \nonumber \\
\text{s.t.}&~ \Pr\Big(\bigvee_{t=0}^{h} \bigvee_{v, v' \in \cX}R^{j}(S^v_t, S^{v'}_t) \mid \pi \Big) \le \Delta^j, j \in \cJ. \label{con1}
\end{align}}}

According to the definition above, if $\cX= \cN$, i.e., single-agent interaction points, then agents are independent, except for sharing the risk budget. Such variant has been studied extensively in the literature for constrained MDPs (see \cite{de2021constrained} for a comprehensive survey). As reported below, we provide an exact computation method, which improves upon the approximate approach provided in \cite{de2017bounding}.\footnote{We note that the chance constraint in \cite{de2017bounding} is slightly more general than ours, as it bounds the total probability of a sum of costs exceeding a certain threshold. The current approach can capture such constraints by augmenting the state space to include all possible distinct values of the sum. Arguably, in some applications, the number of distinct values could be exceedingly large. Therefore, we can discretize the set of values such that, in the worst case, we violate the constraint by at most a factor of $(1+\epsilon),$ often referred as resource augmentation model.}  

\subsection{Execution Risk} 

Define the {\em execution risk} of a run at joint state $s_k$  as 
$\textsc{Er}^j(s_k) \triangleq   \Pr\Big(\bigvee_{t=k}^h \bigvee_{v, v' \in \cX}R^{j}(S^v_t, S^{v'}_t) \mid S_k = s_k\Big).$ By definition, Cons.~\raf{con1} is equivalent to $\textsc{Er}^j(s_0) \le \Delta^j$. Here, it is assumed that any pair of agents may fail in at most one interaction point. This assumption will be important to obtain a {\em linear} constraint and holds for the studied intersection application (see Fig.~\ref{fig:colide-points}) where each collision point is between a unique set of agents. For applications where such assumption may not hold, Eqn.~\raf{eq:bool} below establishes an upper bound on the execution risk based on the Union bound. That being so, the proposed approach in Sec.~\ref{sec:ilp} can still generate a feasible yet possibly suboptimal solution.
The execution risk can be written as
\begin{flalign}
\textsc{Er}^j(s_k)=&\Pr\Big(\bigvee_{t=k}^h \bigvee_{i\in \cN} \bigvee_{v, v' \in \cX^i}R^{j}(S^v_t, S^{v'}_t) \mid S_k = s_k\Big)&& \notag\\%
=&\sum_{i \in \cN}   \Pr\Big(\bigvee_{t=k}^h \bigvee_{v, v' \in \cX^i}R^{j}(S^v_t, S^{v'}_t) \mid S^i_k = s^i_k\Big),\label{eq:bool}&& \raisetag{25pt}
\end{flalign}\noindent where the last equation holds by the assumption on mutual exclusivity of events and by conditional independence. 
We  write execution risk at interaction point $i$ as  $\textsc{Er}^j(s^i_k) \triangleq  \Pr\Big(\bigvee_{t=k}^h \bigvee_{v, v' \in \cX^i}R^{j}(S^v_t, S^{v'}_t) \mid S^i_k = s^i_k\Big)$. Thus, $\textsc{Er}^j(s_k) = \sum_{i \in \cN} \textsc{Er}^j(s^i_k)$. 
\begin{lemma}\label{lem:er}
The execution risk at interaction point $i\in \cN$ can be written recursively as
{\footnotesize{\begin{align}
\textsc{Er}^j(s^i_k)=  \sum_{s^i_{k+1}\in \cS^i}\sum_{a^i \in \cA^i} \textsc{Er}^j(s^i_{k+1}) \pi(s^i_k, a^i) \widetilde T^j(s^i_k, a^i, s^i_{k+1})& 
 + \widetilde r^j(s_k),&  \notag
\end{align}}}\noindent where $\widetilde r^j(s^i_k)\triangleq 1 - \prod_{v,v' \in \cX^i}(1-r^{j}(s^v_k, s^{v'}_k))$ is the probability of failure at $s^i_k$, and $\widetilde T^{i,j}(s^i_k, \pi(s^i_k), s^i_{k+1})\triangleq  T^i(s^i_k, \pi(s^i_k), s^i_{k+1})\prod_{v,v' \in \cX^i}(1-r^{j}(s^v_k, s^{v'}_k))$.
\end{lemma}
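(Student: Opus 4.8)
The plan is to establish the recursion by a one-stage (first-step) decomposition of the disjunction that defines $\textsc{Er}^j(s^i_k)$, separating the failure event at the current time $k$ from failures occurring strictly later. Write $F_k \triangleq \bigvee_{v,v' \in \cX^i} R^j(S^v_k, S^{v'}_k)$ for a collision at time $k$ and $F_{>k} \triangleq \bigvee_{t=k+1}^h \bigvee_{v,v' \in \cX^i} R^j(S^v_t, S^{v'}_t)$ for a collision thereafter, so that, conditioned on $S^i_k = s^i_k$, the defining disjunction is exactly $F_k \vee F_{>k}$. Since $F_k \vee F_{>k} = F_k \vee (\neg F_k \wedge F_{>k})$ is a disjoint union, I would use $\Pr(F_k \vee F_{>k} \mid s^i_k) = \Pr(F_k \mid s^i_k) + \Pr(\neg F_k \wedge F_{>k} \mid s^i_k)$, splitting the execution risk into a ``local failure now'' term and a ``survive now, fail later'' term.

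For the first term, I would invoke that, conditioned on the joint state $s^i_k$, the Bernoulli failure indicators $R^j(s^v_k, s^{v'}_k)$ over the pairs $v,v' \in \cX^i$ are mutually independent with parameters $r^j(s^v_k, s^{v'}_k)$. Hence the probability of surviving every pairwise interaction at time $k$ is $\prod_{v,v' \in \cX^i}(1 - r^j(s^v_k, s^{v'}_k))$, giving $\Pr(F_k \mid s^i_k) = 1 - \prod_{v,v'\in\cX^i}(1 - r^j(s^v_k, s^{v'}_k)) = \widetilde r^j(s^i_k)$, which matches the stated local failure probability.

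For the second term I would condition on the action $a^i$ drawn according to $\pi(s^i_k, \cdot)$ and on the realised next interaction state $s^i_{k+1}$, expanding $\Pr(\neg F_k \wedge F_{>k} \mid s^i_k)$ as $\sum_{a^i}\pi(s^i_k, a^i)\sum_{s^i_{k+1}} \Pr(\neg F_k \wedge S^i_{k+1} = s^i_{k+1} \wedge F_{>k} \mid s^i_k, a^i)$. Here three facts combine: the survival event $\neg F_k$ is a function of $s^i_k$ alone and is conditionally independent of the transition given $(s^i_k, a^i)$, contributing the factor $\prod_{v,v'}(1 - r^j(s^v_k, s^{v'}_k))$; the transition contributes $T^i(s^i_k, a^i, s^i_{k+1})$; and, by the Markov property, $F_{>k}$ depends on the past only through $S^i_{k+1}$, so that $\Pr(F_{>k} \mid S^i_{k+1} = s^i_{k+1}, \neg F_k) = \textsc{Er}^j(s^i_{k+1})$. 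The product of the first two factors is precisely $\widetilde T^{i,j}(s^i_k, a^i, s^i_{k+1})$, which yields the summation term of the recursion; adding the local term $\widetilde r^j(s^i_k)$ gives the claimed identity.

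The main obstacle I anticipate is not the algebra but making the independence and Markov justifications airtight: specifically, arguing that conditioning on the survival event $\neg F_k$ (a past event) leaves the future failure probability unchanged given $S^i_{k+1}$, and that the pairwise failure indicators at a fixed time are mutually independent given the joint state. Both rest on the modelling assumptions that failures are state-determined Bernoulli events independent across pairs and that the joint process is Markov under a non-stationary policy, so I would state these explicitly before invoking them. A secondary point worth one sentence is that the whole decomposition is carried out at the level of a single interaction point $i$, which is legitimate precisely because Eqn.~\raf{eq:bool} has already reduced the global execution risk to a sum of per-interaction risks.
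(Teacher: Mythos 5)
Your proposal is correct and follows essentially the same route as the paper's proof: a one-step decomposition separating failure at time $k$ from failure afterwards, using the product form $\prod_{v,v'\in\cX^i}(1-r^j(s^v_k,s^{v'}_k))$ for the survival probability, conditional independence of the transition from the survival event, and the Markov property to identify the tail probability with $\textsc{Er}^j(s^i_{k+1})$. The only (cosmetic) difference is that the paper works with the complement $1-\Pr(\text{safe throughout})$ and recovers the stated form by an algebraic rearrangement at the end, whereas your disjoint-union split $F_k\vee(\neg F_k\wedge F_{>k})$ yields the two terms of the recursion directly.
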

\begin{proof}
See Sec.~\ref{lemproof} in the Appendix.
\end{proof}

\subsection{Integer Linear Programming Formulation}\label{sec:ilp}

Define a variable $x_{s,k,a}^{i,j}  \in [0,1]$ for each state $s^i_k$ at time $k$, action $a^i\in \cA^i$, agent $i\in\cN$, and constraint $j \in \cJ$ such that
\begin{align}
&\sum_{a^i \in \cA^i}  x_{s,k,a}^{i,j} = \sum_{s^i_{k-1} \in \cS^i} \sum_{a^i \in \cA^i} x^{i,j}_{s,k-1,a} \widetilde T^{i,j}(s^i_{k-1}, a^i, s^i_{k}), \notag \\
& \hspace{10mm} k = 1,...,h-1,  s^i_k \in \cS^i, i \in \cN, j \in \cJ\cup \{0\}, \label{conf1}\\
&\sum_{a^i \in \cA^i} x^{i,j}_{s,0,a}  = 1, \quad i\in\cN, \label{conf2}
\end{align}
where $\widetilde T^{i,0}(\cdot,\cdot, \cdot) \triangleq T^i(\cdot,\cdot, \cdot)$. 
As such, the above {\em flow} equations for $j=0$ represent the standard dual-space constraints for SSP~\cite{10.5555/2074158.2074203}. In the context of SSP, $x^{i,0}_{s,k,a}$ stand for the probability of agent $i$ taking action $a^i$ from state $s^i_k$.

By recursively expanding the execution risk at $s_0$ using Lemma~\ref{lem:er} and Eqn.~\raf{eq:bool} we arrive at the following result.
\begin{theorem}\label{lem1}
Given a conditional plan $\bx$ that satisfies Eqn.~\raf{conf1}-\raf{conf2}, the execution risk can be written as a linear function of $\bx$,
{\scriptsize\begin{align}
\textsc{Er}^j(s_0) =  \sum_{k = 1}^{h}\sum_{i\in\cN~~} \sum_{\mathclap{\substack{s^i_{k-1}\in \cS^i \\a^i \in \cA^i, s^i_k\in \cS^i }}} \widetilde r^{j}(s^i_{k}) x^{i,j}_{s,k-1,a} \widetilde T^{i,j}(s^i_{k-1}, a^i, s^i_{k})
+\sum_{i \in \cN} \widetilde r^j(s^i_0).\nonumber
\end{align}} 
\end{theorem}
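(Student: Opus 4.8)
The plan is to prove the formula by fully unrolling the one-step recursion of Lemma~\ref{lem:er} down to the horizon and then recognizing the resulting path sums as the flow variables $\bx$. First I would reduce to a single interaction point: by Eqn.~\raf{eq:bool} the execution risk decomposes additively as $\textsc{Er}^j(s_0)=\sum_{i\in\cN}\textsc{Er}^j(s^i_0)$, so it suffices to derive the claimed per-interaction expression for each $\textsc{Er}^j(s^i_0)$ and then sum over $i\in\cN$.

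For a fixed $i$, I would introduce the survive-and-reach mass $\rho^{i,j}_k(s^i_k)$, the probability of reaching $s^i_k$ at time $k$ under the policy induced by $\bx$ without having failed before time $k$, defined by $\rho^{i,j}_0(s^i_0)=1$ (and zero on the remaining time-$0$ states) together with the forward recursion $\rho^{i,j}_k(s^i_k)=\sum_{s^i_{k-1},a^i}\rho^{i,j}_{k-1}(s^i_{k-1})\,\pi(s^i_{k-1},a^i)\,\widetilde T^{i,j}(s^i_{k-1},a^i,s^i_k)$. Starting from Lemma~\ref{lem:er} at $k=0$ and substituting the recursion repeatedly, with base case $\textsc{Er}^j(s^i_h)=\widetilde r^j(s^i_h)$ at the horizon (the sum over $s^i_{h+1}$ is empty), I would show by induction on the unrolling depth that the nested sums telescope into $\textsc{Er}^j(s^i_0)=\sum_{k=0}^{h}\sum_{s^i_k\in\cS^i}\rho^{i,j}_k(s^i_k)\,\widetilde r^j(s^i_k)$; intuitively, each level contributes the instantaneous failure probability $\widetilde r^j(s^i_k)$ weighted by the mass of surviving and reaching $s^i_k$.

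The final step is to replace the reach masses $\rho^{i,j}_k$ by the LP variables. Setting $x^{i,j}_{s,k,a}=\rho^{i,j}_k(s^i_k)\,\pi(s^i_k,a^i)$, the forward recursion for $\rho$ becomes exactly the flow balance \raf{conf1}, while $\rho^{i,j}_0(s^i_0)=1$ is \raf{conf2}; hence $\sum_{a^i}x^{i,j}_{s,k,a}=\rho^{i,j}_k(s^i_k)$, and for $k=1,\dots,h$ one has $\rho^{i,j}_k(s^i_k)=\sum_{s^i_{k-1},a^i}x^{i,j}_{s,k-1,a}\,\widetilde T^{i,j}(s^i_{k-1},a^i,s^i_k)$. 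Substituting this into the telescoped sum, isolating the $k=0$ term (which equals $\widetilde r^j(s^i_0)$ since $\rho^{i,j}_0$ is a unit mass at $s^i_0$), and summing over $i\in\cN$ yields precisely the claimed linear expression. Writing the $k=1,\dots,h$ terms through $x^{i,j}_{s,k-1,a}\widetilde T^{i,j}(\cdot)$ rather than through $\sum_{a^i}x^{i,j}_{s,k,a}$ is what keeps the formula well defined at the horizon, since no variable $x^{i,j}_{s,h,a}$ is introduced by \raf{conf1}-\raf{conf2}.

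I expect the main obstacle to be the bookkeeping of the unrolling: verifying that repeated substitution of Lemma~\ref{lem:er} collapses cleanly into a single sum over $k$ of reach-mass-weighted instantaneous risks, with the horizon base case handled correctly, and then confirming that the flow constraints \raf{conf1}-\raf{conf2} are exactly the defining recursion for $\rho^{i,j}_k$ so that the identification $\sum_{a^i}x^{i,j}_{s,k,a}=\rho^{i,j}_k$ is legitimate for every $k$. Extra care is needed at the time-$h$ boundary, where the reach mass must be expressed via one transition step from the time-$(h-1)$ variables rather than via a nonexistent time-$h$ flow variable.
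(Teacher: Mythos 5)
Your proposal is correct and follows essentially the same route as the paper's proof: both decompose the risk over interaction points via Eqn.~\raf{eq:bool}, unroll the recursion of Lemma~\ref{lem:er} by induction on the horizon with the time-$h$ base case handled separately, and identify the surviving-reach probabilities with the flow variables through the policy induced by normalizing $\bx$. The only cosmetic difference is that you make the reach mass $\rho^{i,j}_k$ explicit before substituting the LP variables, whereas the paper carries the $x^{i,j}_{s,k,a}$ directly through its induction hypothesis.
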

\begin{proof}
See Sec.~\ref{thmproof} in the Appendix.
\end{proof}

Consequently, we can formulate MCC-SSP as an ILP that has a polynomial number of variables and constraints in terms of $h,|\cN|, |\cX|, |\cA^v|$ when the number of agents per interaction is at most a constant $c$, i.e., $|\cX^i|\le c$. This would only require enumerating agent actions within each interaction point which is significantly lower than the complete enumeration.
{\footnotesize\begin{align}
\max_{x, z}~~& \sum_{k=0}^{h-1}\sum_{i \in \cN}\sum_{s^i_k \in \cS^i, a^i \in \cA^i} x^{i,0}_{s,k,a} U(s^i_k, a^i) \qquad \qquad \textsf{(MCC-SSP-ILP)}\notag\\
\text{s.t.}~~& \text{Cons.~\raf{conf1}-\raf{conf2}}, \notag\\
&\sum_{k = 0}^{h-1}\sum_{i\in \cN} \sum_{\mathclap{\substack{~~~s^i_{k}\in \cS^i\\~~~ a^i \in \cA^i, s^i_{k+1}\in \cS^i }}} \widetilde r^{j}(s^i_{k+1}) x^{i,j}_{s,k,a} \widetilde T^{i,j}(s^i_k, a^i, s^i_{k+1})\le \widetilde\Delta^{j}, j\in \cJ \label{riskcons} \\
&\sum_{a^i\in \cA} z^i_{s,k,a}\le 1, \quad k=0,...,h-1, s^i_k \in \cS^i, i \in \cN\label{conzone}\\
& x^{i,j}_{s,k,a} \le z^i_{s,k,a}, \quad   i\in \cN, j \in \cJ, k=0,...,h-1, s^i_k \in \cS^i\label{conzbind}\\
&\sum\limits_{\mathclap{a^i \in \cA^i \mid a^i_v = \overline a^v}} z^{i}_{s,k,a} = \sum_{\mathclap{~~~~{a^i}' \in \cA^{i'}\mid a^{i'}_v = \overline a^v}} z^{i'}_{s,k,a},  \ v\in \cX,  \overline a^v \in \cA^v, i, i' \in \cN \mid v \in \cX^i \cap \cX^{i'}    \label{con:bind}\\
&z^i_{s,k,a}\in \{0,1\}, \quad x^{i,j}_{s,k,a} \in [0,1],~~\quad i \in \cN, j \in \cJ, \notag\\
& s^i_k \in \cS^i,  k=0,...,h-1.
\end{align}} 

In {\sc MCC-SSP-ILP}, Cons.~\raf{riskcons} follows directly from Theorem~\ref{lem1}, where $\widetilde\Delta^{j}\triangleq \Delta^j - \sum_{i\in \cN} \widetilde r^j(s^i_0).$ 
The variable $z_{s,k,a}^i$ is used to bind the actions of $i$ across all flows. In other words, if action $a^i$ is selected with respect to risk criterion $j$, and $a'$ for criterion $j'$, then $a^i=a'$.  Thus, Cons.~\raf{conzbind} ensures that the same action is selected. Since, $z_{s,k,a}\in \{0,1\}$, Cons.~\raf{conzone} guarantees at most one action is chosen at each node. Lastly, Cons.~\raf{con:bind} maintains the consistency of the selected action for each agent across all interaction points.

To quantify the planner's complexity, we next appraise the worst-case running time of MCC-SSP analytically by examining the maximum number of nodes in the solution space. As an upper bound, consider a tree-based structure that expands all the nodes for a defined horizon of $h$ without combining similar states as in the graph structure that the MCC-SSP follows. Given a set of $|\cN|$ interaction points each containing $|\cX^i|$ agents, and $|\cA^v|$ number of actions per agent $v$, we have $\prod_{v\in \cX^i}|\cA^v|$ possible actions per interaction point $i \in \cN$. Thus, the last level of the solution tree for interaction point $i$ contains $(\prod_{v\in \cX^i}|\cA^v|)^h$ nodes. Hence, The total number of nodes for the planning problem is $O(\prod_{i\in \cN}((\prod_{v\in \cX^i}|\cA^v|)^h))$, which can be also written as $O(|\cN|((|\cX^{i^m}|\cdot|\cA^{v^m}|)^h))$ where $i^m$ is the point with maximum agents, and $v^m$ is the agent with maximum actions. 

\section{Risk-Aware Planning Under MCC-SSP}\label{sec:model}

\subsection{Agents and Interaction Points}
Recall that in the MCC-SSP formulation vehicles are indexed by the set $\cX$, wherein HVs constitute a subset $\cY\subset\cX$ and have a single action (hence are uncontrollable). The set of interaction points $\cN$ represents all possible collision points between agents in the intersection, including those right before the intersection, as pictured in Fig.~\ref{fig:colide-points}.
Each interaction point $i \in \cN$ is associated with a set of {\em reference} maneuvers $\cM^i$ (green dashed lines in Fig.~\ref{fig:colide-points}). A reference maneuver $m_v\in\cM^i$ is defined as a path that an agent $v$ can follow regardless of the speed, while actions are defined as trajectories. 

\subsection{Action Model}

Action $a^i\triangleq(a^i_v)_{v\in\cX^i}$ at the interaction point $i \in \cN$ resembles one possible combination of variants of reference maneuvers that pass through that point, with $a^i_v\in \cA^v$ representing a variation of a corresponding reference maneuver $m_v$ with a specific speed.
For an agent, a typical scenario would constitute a two-action model: 1) perform the maneuver or 2) wait. We \textit{expand} the scope by introducing \textit{speed-sensitive maneuvers}, such as {\tt turn\_left\_slow, turn\_left\_fast, wait}.

\subsection{State Representation}\label{sec:staterpr}
An interaction point $i \in \cN$ at time $k$ is associated with a state $s^i_k\triangleq (k,p^v)_{v \in \cX^i}$, where $p^v$ is the PFT of the current executing maneuver of vehicle $v$ or the {\tt wait} command. Upon arriving at the intersection, the vehicle is added to the state with the {\tt wait} maneuver. 
The maneuver state changes (under a transition function) when an action is applied.
The vehicle's position and velocity are computed based on the progression of the maneuver PFT. We define the risk $r^j(s^i)$ of state $s^i$ as the probability of collision between the agents in the state (formalised in Sec.~\ref{sec:riskc}).

\subsection{Transition Function}
The transition function $T^v(s^v,a^v,\overline s^v)$ is computed by utilizing the intent recognition subsystem. The corresponding PFT resulting from action $a^v$ can be computed based on prior data collected by the Motion Model generator, as well as other external factors such as road conditions~\cite{huang2018hybrid,huang2019online, pft}. For HVs, the  Intent Recognizer can capture human uncertainty. We refer the reader to \cite{gindele2013learning} for more details. \textcolor{black}{Moreover, any significant deviation of an AV or HV from its trajectory (mainly taking an illegal turn) due to some fault or miscommunication is handled by setting the system to a halt state. A halt state causes all vehicles and light signals to stop until the vehicle clears the intersection.}


\vspace{-5pt}
\subsection{Operating Horizon}
We adopt a receding horizon approach for online execution. For each time step $t$, we solve an MCC-SSP model for a horizon of length $h$. Vehicles that leave the intersection are removed from $\cX$,  and the model is subsequently solved on a rolling basis for the remaining vehicles still in the intersection.
Agents that are still executing their maneuvers from the previous state are considered obstacles and thus, there are no actions that apply to them.
The horizon duration $\Delta t$ is chosen to be less than the action termination time $\tau$ in order to provide a smoother transition. However, it's noteworthy that a smaller horizon window will result in a shorter total planning time ($h \cdot \Delta t $). 

\subsection{Objective Function}\label{sec:objjj}
The proposed Intelligent Intersection system seeks to optimize the following \textit{multi-criteria} objective function: 1) Maximize the rate of flow (vehicles per unit time); 2) Minimize the maximum waiting time (duration from the moment the vehicle reaches the intersection to the moment it enters the intersection) to ensure fair distribution of traffic; 3) Facilitate a priority-based objective for emergency vehicles. 
Recall that the optimal policy $\pi^* = \arg\max_{\pi}  \mathbb{E}\Big[\sum_{t=0}^{h-1} U(s_{t}, a_t)  \mid \pi \Big]$ requires a definition of a utility function. For a state $s=(k,p_v)_{v\in \cX}$, and action $a=(a^v)_{v\in\cX}$, we define $U(s,a) \triangleq \sum_{v \in \cX} (U^v(a^v) \mid  a^v \neq \texttt{wait}), $ such that $U^v(a^v) \triangleq \lambda_0 {\texttt{vel}(a^v)}  +\lambda_1 P^v + \lambda_2 \sqrt{w^v}+ \lambda_3\frac{\sum_{v' \in \texttt{lane}(v)} P^{v'}}{|\texttt{lane}(v)|}$,
where $\lambda_0 \text{ to } \lambda_3 \in \RR$ are used to weigh each term. Here, $\texttt{vel}(\cdot)$ provides the velocity of a given reference trajectory (faster maneuvers provide higher rewards), $P^v$ represents a priority score range (e.g., from 1 to 10) and the last term is the average priority of the vehicle's lane, with $\texttt{lane}(v)$ standing for the set of vehicles that are assigned to vehicle $v$'s source lane. In the definition of $U^v(a^v)$,  $w^v$ captures the waiting time associated with the state.  This requires augmenting the state space to include total waiting time for each vehicle $v$, which increments whenever state maneuver $s^v = \texttt{wait}$. For exposition clarity, we omitted the waiting time from the state representation in Sec.~\ref{sec:staterpr}. 
Though sufficient for current purposes, the presented objective function can be further extended to incorporate progress (or speed) and comfort requirements for each agent~\cite{wei2014behavioral}.

\subsection{Risk Computation }
\label{sec:riskc}

Given that maneuvers are represented as PFTs, i.e. multivariate Gaussian processes, the risk can be calculated via Monte Carlo sampling. Particularly, to compute the risk between two maneuver trajectories $p^v, p^{v'}$, we extrapolate over time (over a duration of $\tau$ ), but at higher resolution (i.e., PFT resolution). Define $\mathtt{Col^{vv'}}(t)$ to be a Bernoulli random variable such that $\mathtt{Col}^{vv'}(t)=1$ if and only if collision occurs at time $t\in \{0, \Delta \tau, 2 \Delta \tau,..., \tau \}$ following their corresponding PFTs, and $\mathtt{Col}^{vv'}(t)=0$ otherwise. Then, at a given time $t$, the Monte Carlo sampling approach can be invoked to determine $\mathtt{Col}^{vv'}(t)$. Given $\mathtt{Col}^{vv'}(t)$ for $\forall t$, the risk of collision throughout the time horizon $\tau$ between agents $v$ and $v'$ is defined as $r^j(v,v')\triangleq\Pr\Big(\bigvee_{t=1}^\tau \mathtt{Col}^{vv'}(t) \Big)  = 1 - \Pr\Big(\bigwedge_{t=0}^\tau \neg \mathtt{Col}^{vv'}(t) \Big) = 1 - \prod_{t=0}^\tau \Pr(\neg \mathtt{Col}^{vv'}(t)) =1 - \prod_{t=0}^\tau (1-  \Pr( \mathtt{Col}^{vv'}(t)))$.

\subsection{Offline Risk Computation}\label{subsec:risk_precomp}
To streamline the planning process, we precompute the risk of all the possible states of every interaction point. For an interaction point $i$ with $|\cM^i|$ maneuvers, we allocate an $|\cM^i|-$dimensional lookup table of size $O(\tau'^{|\cM^i|})$, where $\tau' \geq \tau$ is the maximum number of potential progressions of a vehicle through the reference maneuver ($ \tau(m_v)=|m_v| \quad \forall \; m_v\in \cM^i$), which is dependent on the defined PFT time-step of the trajectory. After precomputing the risk for all collision points, we can efficiently obtain the risk of any state by retrieving the risk of every combination of vehicles in the state. We precompute the risk for multiple standard vehicle dimensions to generalize for any vehicle model.

\section{Trajectory Optimization for AVs} \label{sec:traj_optimization}
AVs in the intersection problem require a reference trajectory (action) to follow. Such trajectory should be optimal with respect to certain objectives (e.g., comfort) while respecting the vehicles' control limits. Moreover, the trajectory should be safe concerning static obstacles as well as other vehicles in the intersection (dynamic obstacles). 
To generate a set of reference trajectories, we employ a technique which finds the optimal path given the system and collision avoidance constraints.
We model the trajectory optimization task as a finite-horizon discrete-time shooting problem \cite{mayne1966second}, and tackle it with the Control-Limited Differential Dynamic Programming (DDP) algorithm \cite{tassa2014control}. In the subsections to follow, we address each of the above aspects.


\subsection{Dynamics Model}
The adopted vehicle dynamics model is based on the kinematic bicycle model. The bicycle model provides an approximate yet efficient representation (by averaging the speed of both wheels on a given axle) of the vehicle dynamics. The dynamics model with the center of mass as a reference point is defined as:
\begin{gather}
 \Dot{x}^c = v^c \cdot \cos(\theta+\beta) ~,~  \Dot{y}^c = v^c \cdot \sin(\theta+\beta) \nonumber\\
 \Dot{\theta} = \omega = v^c \cdot \frac{\tan(\zeta) \cos(\beta)}{L} ~,~  \beta = \tan^{-1}(l_r \frac{\tan(\zeta)}{L}),\nonumber
\end{gather}where $(x^c, y^c, \theta)$ is the position and heading of the vehicle, $v^c$ is the vehicle's velocity, $\zeta$ is the steering angle, $L$ is the length of the vehicle, and $l_r$ is the distance from the back axle to the center of mass. The dot notation (e.g., $\Dot{\theta}$) represents the derivative of the variable.
The trajectory state is defined as $\tilde{x} = [x^c, y^c, \theta, \zeta, v^c]$ and the control is defined as $\tilde{u} = [a, \Dot{\zeta}]$ where $a$ is the vehicle's acceleration (i.e.,  $\Dot{v^c}$).

We convert the continuous dynamics model into a discrete one by updating the state every $\Delta t$ using an explicit Euler integration scheme. The subsequent state $\tilde{x}_{k+1}$ is defined as:
\begin{gather}
x_{k+1}^c = x_k^c + \Dot{x}^c \cdot \Delta t ~,~
y_{k+1}^c = y_k^c + \Dot{y}^c \cdot \Delta t \nonumber\\
\theta_{k+1} = \theta_k + \Dot{\theta} \cdot \Delta t ~,~
\zeta_{k+1} = \zeta_k + \Dot{\zeta} \cdot \Delta t, v^c_{k+1} = v^c_k + a \cdot \Delta t.\nonumber
\end{gather}

\subsection{Cost Function}
The purpose of the cost function is to yield a smooth trajectory, while also imposing safety requirements to bypass static and dynamic obstacles. The integral cost function is defined as $[\Dot{\zeta}, a_c, c_s, c_d]$ where $a_c$ is the rotational acceleration ($a_c = (v^c)^2 \cdot \frac{\tan(\zeta)\cos(\beta)}{L}$), and $c_s$, $c_d$ are the static and dynamic obstacle costs as defined in Secs.~\ref{stt} and \ref{dnn}, respectively. A quadratic barrier function is used to enforce the limits on steering angle $\zeta$  (the rate of change of the steering angle is handled by the solver as an input constraint). The boundary cost function is $[x^c-G_x, y^c-G_y, \theta-G_\theta, v^c-G_v]$ where $G$ is the goal state.

\subsection{Static Obstacles}\label{stt}
In the studied intersection, curbs or unpaved areas are treated as static obstacles. Common obstacle avoidance methods either suffer from local minima (e.g., Artificial Potential Fields \cite{khatib1986real}) or tend to direct the state away from obstacles as much as possible (e.g., Harmonic Potential Fields \cite{kim1992real}). This served as a motivation to resort to the Signed Distance Field (SDF) \cite{finean2021predicted} approach with the hinge loss function \cite{mukadam2018continuous}. In implementing SDF, we first generate a binary array of the obstacles (Fig.~\ref{fig:sdf2}) based on the intersection map (Fig.~\ref{fig:sdf1}). Next, we apply the signed distance field function, which returns the signed distance $D_s$ from the zero contour in an array (Fig.~\ref{fig:sdf3}). Finally, we apply the hinge loss function (Fig.~\ref{fig:sdf4}), which returns a zero value if the state is not near an obstacle. 
The hinge loss function is defined as
$$
h(D_s)=
    \begin{cases}
        -D_s+\epsilon & \text{if } d \leq \epsilon\\
        \quad\;\; 0 & \text{if } d > \epsilon
    \end{cases},\notag
$$
where $\epsilon$ is a safety distance from the boundary of the obstacles. We set the static obstacle cost as $c_s = e^{-\frac{1}{2}h(D_s)^2}$.

\subsection{Dynamic Obstacles}\label{dnn}
We render other non-colliding trajectories 
that do not share a collision point but might collide if they were generated close to each other as dynamic obstacles. Two factors are accounted for in the dynamic obstacle avoidance, namely the vehicle's geometric shape and the expected controller uncertainty when following a trajectory.

As illustrated in Fig.~\ref{fig:vehicle_circles}, vehicle's shape can be captured by three adjacently placed overlapping circles. To determine whether two vehicles would collide, one can compare the Euclidean distance between center-points and sum of radius of any pair of circles among the two vehicles.

\begin{figure}[!b]
	\begin{subfigure}{.5\linewidth}
		\centering
		\includegraphics[width=\linewidth]{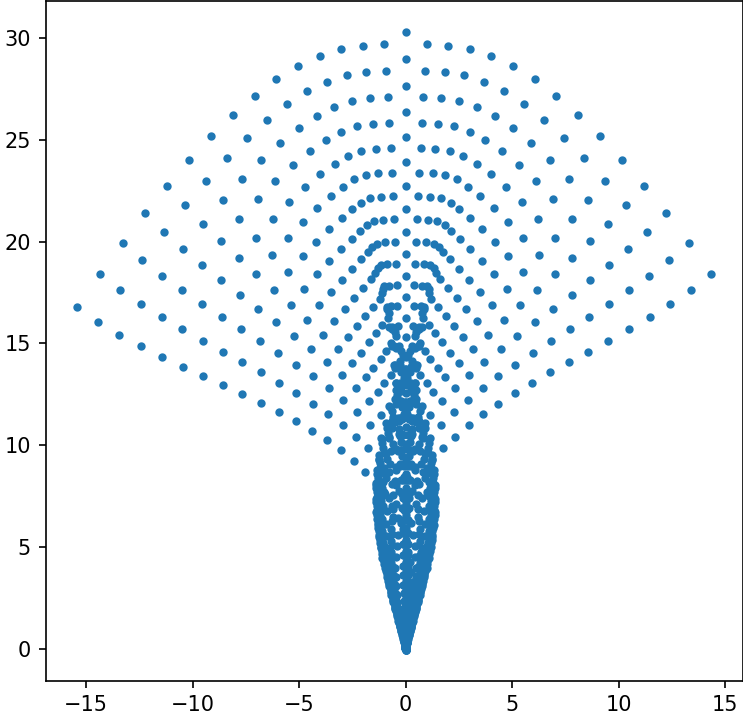}
		\caption{}
			\label{fig:reg_model_paths}
	\end{subfigure}
	\hspace{.03\linewidth}
	\begin{subfigure}{.43\linewidth}
		\centering \vspace{2.5mm}
		\includegraphics[width=\linewidth]{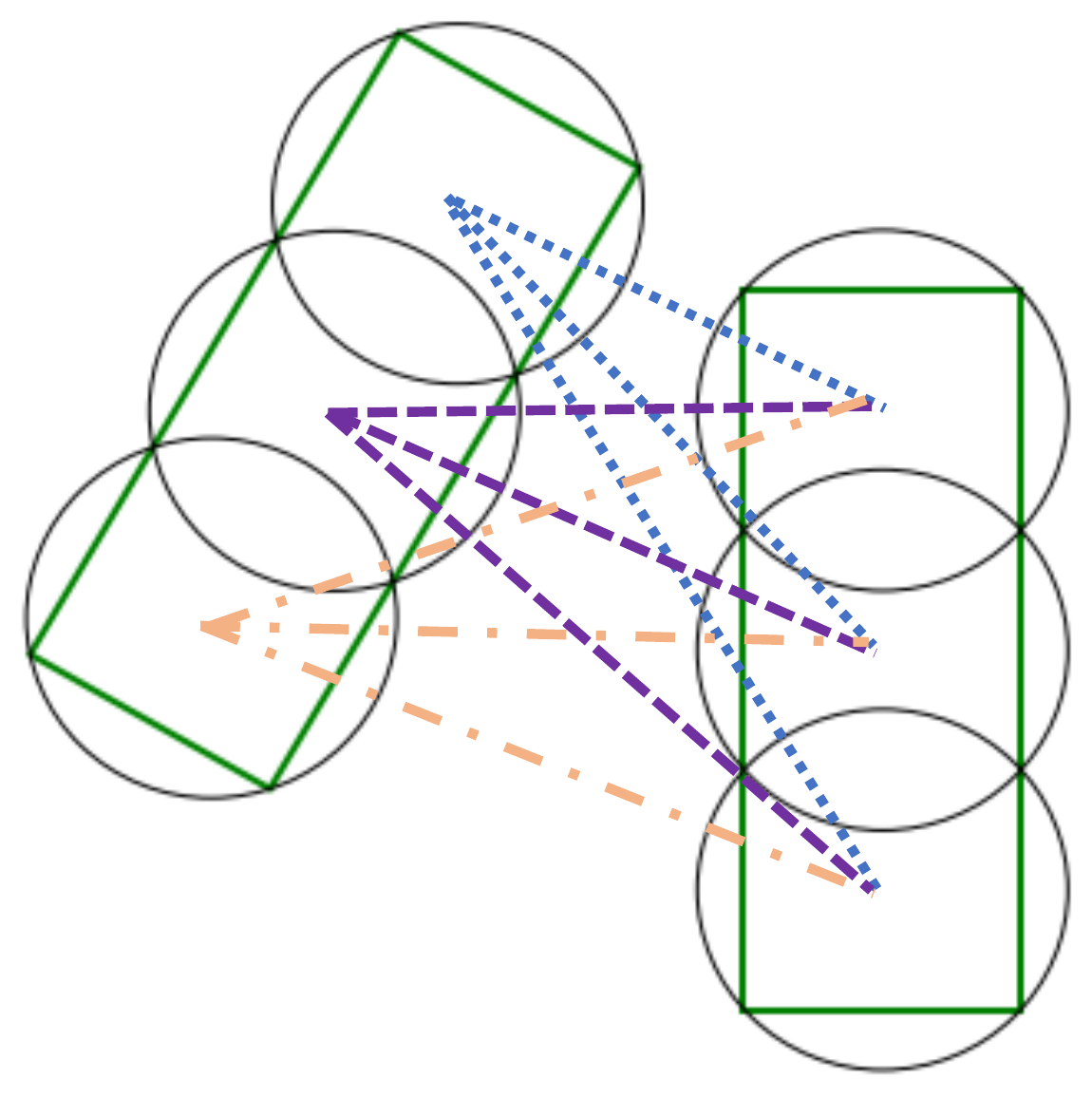}\vspace{1mm}
		\caption{}
		 	\label{fig:vehicle_circles}
	\end{subfigure}
	\caption{(a) A set of 30 uniformly distributed trajectories, and (b) the distance between two vehicles each represented via three circles. }
\end{figure}

\begin{figure*}[t!]
	\begin{subfigure}{.23\textwidth}
		\centering
		\includegraphics[width=\textwidth]{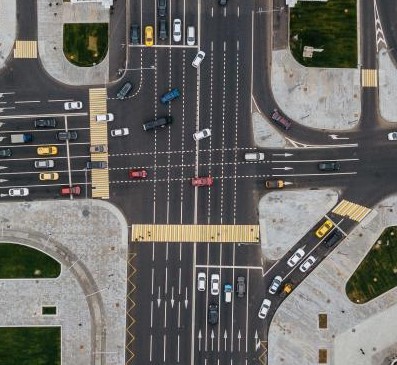}
		\caption{}
		\label{fig:sdf1}
	\end{subfigure}
	\begin{subfigure}{.23\textwidth}
		\centering
		\includegraphics[width=\textwidth]{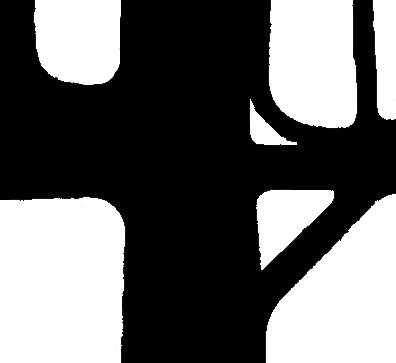}
		\caption{}
		\label{fig:sdf2}
	\end{subfigure}
	\begin{subfigure}{.26\textwidth}
		\centering
		\includegraphics[width=\textwidth]{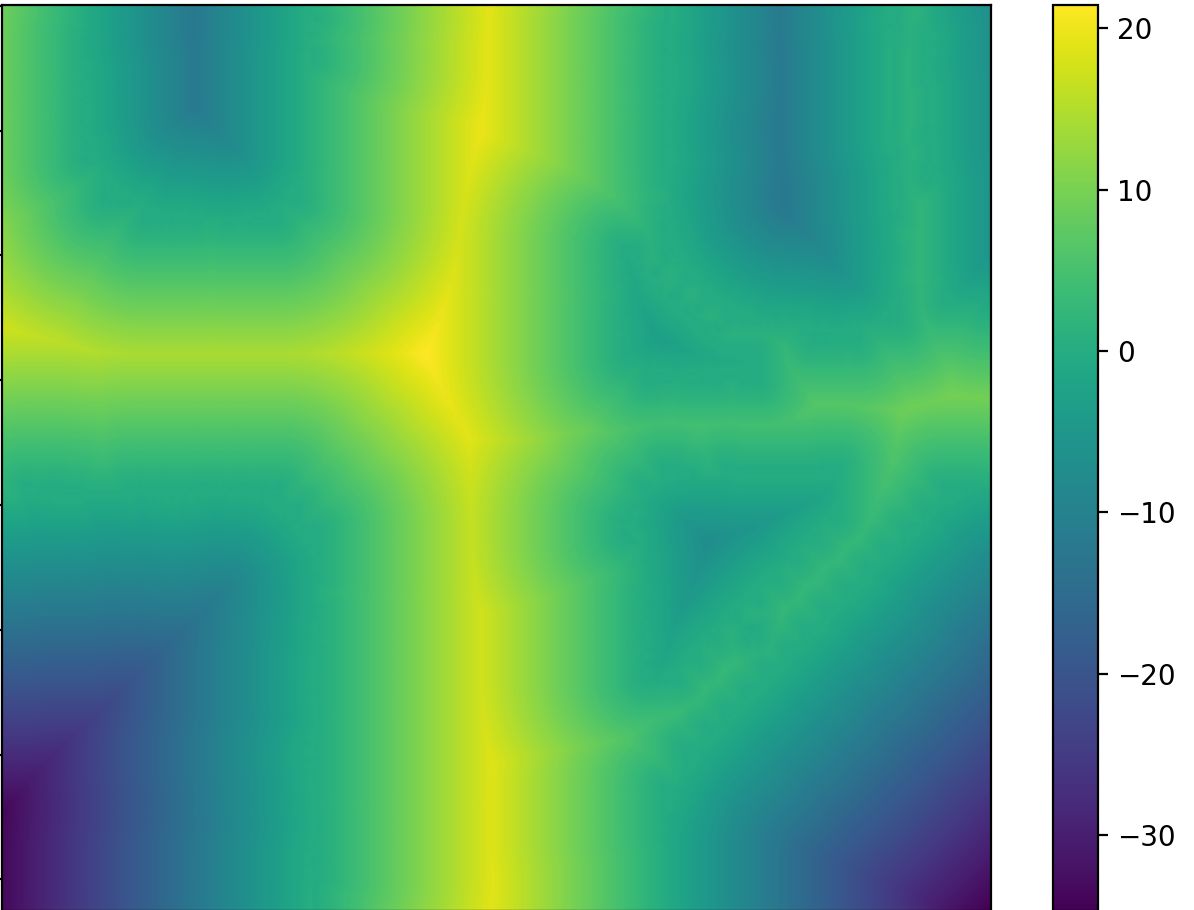}
		\caption{}\label{fig:sdf3}
	\end{subfigure}
	\begin{subfigure}{.26\textwidth}
		\centering
		\includegraphics[width=\textwidth]{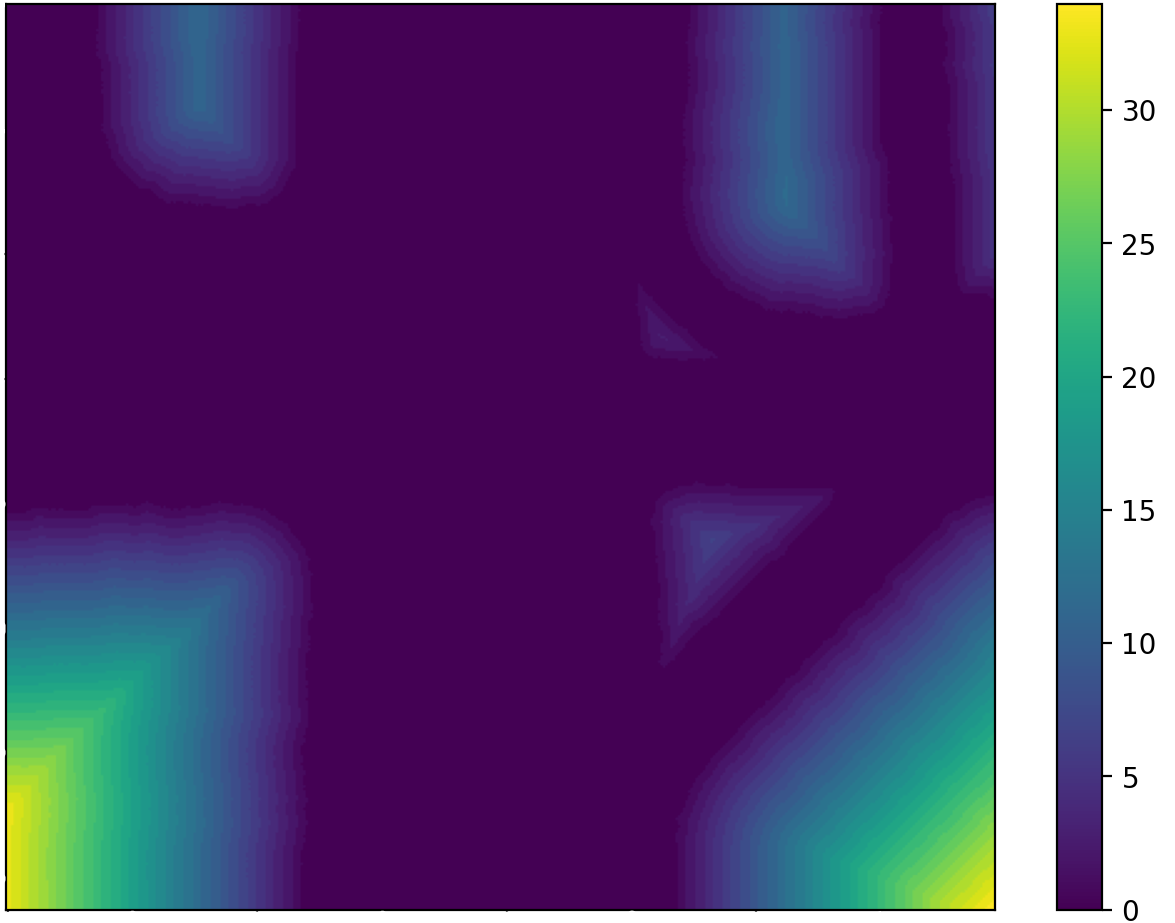}
		\caption{}\label{fig:sdf4}
	\end{subfigure}
	\caption{Static obstacle representation: (a) The original intersection. (b) Static obstacles are highlighted in white. (c) Signed-distance field. (d) Application of the Hinge Loss function.}
\end{figure*}

To incorporate the resultant expected uncertainty from AVs' trajectory following, we construct a regression model of controller uncertainty. The model takes as input the vehicle state and control, namely $[\zeta, v^c, a, \Dot{\zeta}]$, and returns the expected error defined as a radius with a predefined confidence interval. To this end, we first generate a set of reference trajectories (with uniform goal points as visualized in Fig.~\ref{fig:reg_model_paths}) using the cost function defined earlier  which incorporates all the potential trajectories at an intersection. Next, we execute each trajectory using the controller 
and collect the data. 
The data is then normalized by averaging over each subset of input trajectories and the expected error is calculated using the Quantile function. 
The linear regression model is then trained using the normalized input and the expected error. As empirically observed, the model attained 98.96\% accuracy.

Given two sets of vehicle trajectories $S_x^1$ and $S_x^2$, we let the dynamic distance $D_d$ between the two be the sum of the geometric distance (positive value implies overlapping) and the controllers' expected uncertainty. Mathematically,

{\footnotesize\begin{align*}
D_d = \max\Big\{ \Upsilon (o_1, o_2) +\tt{Reg}(s_1)+\tt{Reg}(s_2) ~ \forall~ s_1\in S_x^1, s_2\in S_x^2, ~ 0 \Big\} \,,  
\end{align*}}where $o_1,o_2$ are the circles that represent vehicle geometry per state, $\Upsilon (o_1, o_2) \triangleq \min\limits_{o_1\in s_1, o_2\in s_2} r_{o_1} + r_{o_2} - d(o_1,o_2)$ with $r_{o_1}, r_{o_2}$ denoting the radii of $o_1, o_2$ while $d(\cdot)$ capturing their Euclidean distance, and $\tt{Reg}(\cdot)$ encodes the output of regression models. Observe that $D_d = 0$ indicates no collision. The cost of dynamic obstacles is set to $c_d = e^{-\frac{1}{2}(D_d)^2}$.

\subsection{Trajectory Optimization}
We cast the trajectory optimization problem as a Shooting Method \cite{mayne1966second} and solve it using the Control-Limited Differential Dynamic Programming (DDP) algorithm \cite{tassa2014control}, which is an indirect method that admits quadratic convergence for any system with smooth dynamics \cite{jacobson1970differential} while bounding the control inputs. Hence, we are able to bound the acceleration based on the vehicle's specifications or the required action speed. The trajectory optimization algorithm, explained in Alg.~\ref{alg:traj}, consists of two loops. The first generates an initial trajectory with static obstacles, while the second one incorporates dynamic obstacles.
In Alg.~\ref{alg:traj}, $S_x$ is the set of trajectories' states, $S_u$ is the set of trajectories' controls, $\cA$ is the set of all actions defined by a starting and a goal point. The expression $c_d(S_u^i, S_u^j | (i,j) \nsubseteq \cN )$ represents the dynamic obstacle cost between $i$ and all actions $j$ that don't share a collision point from the set of collision points $\cN$. The boolean variable \textit{Converge} represents the stopping criterion, which can be conditioned on a minimum threshold of difference between the previous trajectories and the new ones, or alternatively fixed to a certain number of iterations.

     \begin{algorithm}[!h]
    \SetAlgoLined
    \SetKwInOut{Input}{input}\SetKwInOut{Output}{output}
    \Input{Set of all actions $\cA$}
    \Output{$S_x$, $S_u$}
    \For{$\cA^i \in \cA$}{
        $S_x^i, S_u^i = DDP(\cA^i)$ 
    }
    \While{Not Converge}{
    \For{$\cA^i \in \cA$}{
        $S_x^i, S_u^i = DDP(\cA^i, S_u^i, c_d(S_u^i, S_u^j | (i,j) \nsubseteq \cN ))$ \\
    }}
    \textbf{return} $S_x, S_u$
    \caption{Trajectory Optimization}
    \label{alg:traj}
    \end{algorithm}

  \begin{table*}[t!]
  \footnotesize
  \begin{minipage}{0.36\textwidth}
    \vspace{-3mm}
	\resizebox{\textwidth}{!}{
    {\renewcommand{\arraystretch}{1.52}
    \begin{tabular}{|p{3.5cm}|l|l|l|l|l|l|}
    \hline
    \multirow{2}{*}{}          & \multicolumn{6}{c|}{Risk Bound ($\Delta$)}       \\ \cline{2-7} 
                               & 0.01\% & 0.1\% & 1.0\% & 5.0\% & 10.0\% & 15.0\% \\ \hline
    FCFS (2 actions)           & 82     & 82    & 82    & 83    & 83     & 83     \\ \hline
    FCFS (3 actions)           & 83     & 84    & 94    & 96    & 97     & 103    \\ \hline
    MCC-SSP (2 actions; $h=1$) & 110    & 112   & 156   & 161   & 163    & 162    \\ \hline
    MCC-SSP (2 actions; $h=2$) & 106    & 113   & \textbf{157}   & \textbf{164}   & 160    & \textbf{166}    \\ \hline
    MCC-SSP (2 actions; $h=3$) & 95     & 112   & 146   & 155   & 157    & 158    \\ \hline
    MCC-SSP (3 actions; $h=1$) &\textbf{118}& \textbf{123}   & 156   & 157   & \textbf{165}    & 165    \\ \hline
    MCC-SSP (3 actions; $h=2$) & 105    & 120   & 151   & 157   & 157    & 160    \\ \hline
    \end{tabular}
    }
    }
    \caption{The throughput (vehicles/minute) of the intersection.}
    
    \label{tab:t1}
    \end{minipage}\hspace{5pt}
\begin{minipage}{0.3\textwidth}
    \centering
       \includegraphics[width=\linewidth]{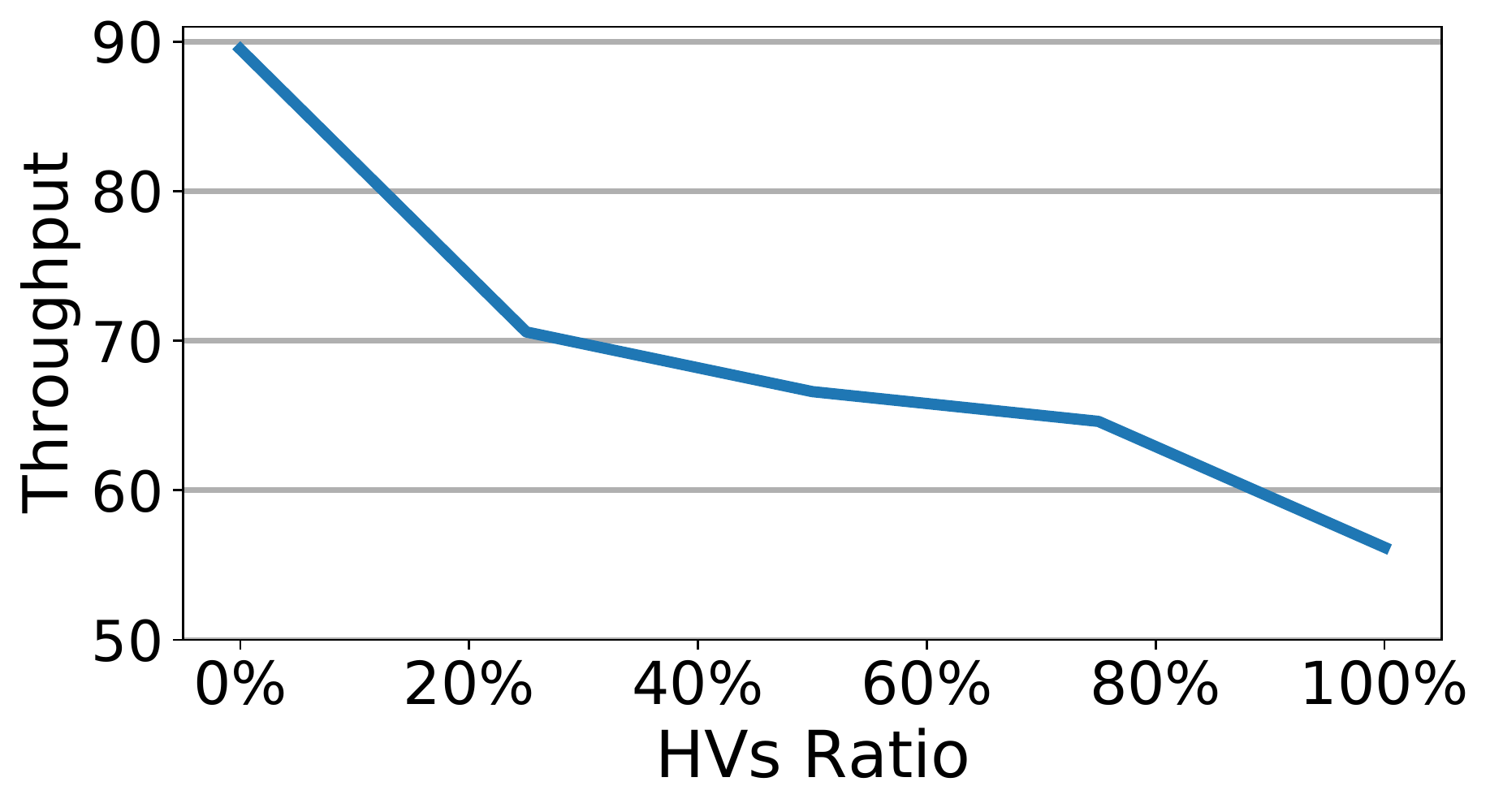}
       \vspace{-10pt}
	\captionof{figure}{The impact of HVs on the throughput (2 actions, $h$=3, $\Delta$=0.01\%).}
	\label{fig:HV-throughput}
    \vspace{-10pt}
	\end{minipage} \hspace{5pt}
	  \begin{minipage}{0.3\textwidth}
\centering
\resizebox{\textwidth}{!}{
\begin{tabular}{|l|l|l|l|}
\hline  
\multicolumn{2}{|c|}{($\Delta=5\%$)}               & \multicolumn{2}{c|}{Planning time (sec)} \\ \hline
Actions                    & Horizon & Preprocessing       & Solving       \\ \hline
\multirow{6}{*}{2 actions} & $h=1$   & 0.00361            & 0.0143           \\ \cline{2-4} 
                           & $h=2$   & 0.01379            & 0.0610           \\ \cline{2-4} 
                           & $h=3$   & 0.02700            & 0.2344           \\ \cline{2-4} 
                           & $h=4$   & 0.06879            & 0.6550           \\ \cline{2-4} 
                           & $h=5$   & 0.17493            & 2.1739            \\ \cline{2-4} 
                           & $h=6$   & 0.43763            & 6.6600            \\ \hline
\multirow{2}{*}{3 actions} & $h=1$   & 0.01426            & 0.0421           \\ \cline{2-4} 
                           & $h=2$   & 0.06661            & 1.0946           \\ \hline
\end{tabular}%
}
\caption{The planning time for 16 random AVs evenly distributed over a two-lane four-sided intersection.}
\vspace{-10pt}
\label{tab:planning_time}
\end{minipage} 
\end{table*}

\vspace{-5pt}
\section{Performance Evaluation} \label{sec:experiment}
To proceed with the evaluation, we first test the introduced ILP formulation's scalability on the multi-agent version of the well-known grid problem with independent agents (i.e., $\cX = \cN$) and shared risk constraint. Next, we employ the CARLA simulator \cite{dosovitskiy2017carla} as a test-bed to simulate the proposed risk-aware intelligent intersection system, verify its effectiveness and practicality as well as collect ground truth data on vehicle driving. In this section, we report the empirical findings on the planning time complexity of MCC-SSP (Table \ref{tab:planning_time}), the throughput of a fully-AV intersection (Table \ref{tab:t1}), the impact of HVs on the throughput (Fig. \ref{fig:HV-throughput}), and  experimentations on variants of the objective function (Fig.~\ref{fig:obj_exp}). Lastly, we present two case studies of the trajectory optimization workflow.

 \subsection{Scalability Analysis}
 As one demonstration, we apply the proposed MCC-SSP model to the multi-agent grid problem wherein robots can move in four directions inside a bounded discretized area. The movement, however, is uncertain with an 80\% success probability represented in the transition function, 5\% of the states are randomly defined as risky, and 10\% of the states are randomly set with a cost of $1$ while the rest have a cost of $2$. The grid size is set to (10000x10000) to assess the planner's performance at scale. The initial state is random for each agent. Fig. \ref{fig:multiagent-grid} plots the running time and the average objective value. As demonstrated by the figure, the formulation scales well with the horizon size and number of agents. In general, the running time is expected to increase with the number of agents and the length of planning horizon as more variables and constraints would be involved.

\begin{figure}[ht]
	\begin{subfigure}{.495\linewidth}
		\centering
		\includegraphics[width=\linewidth]{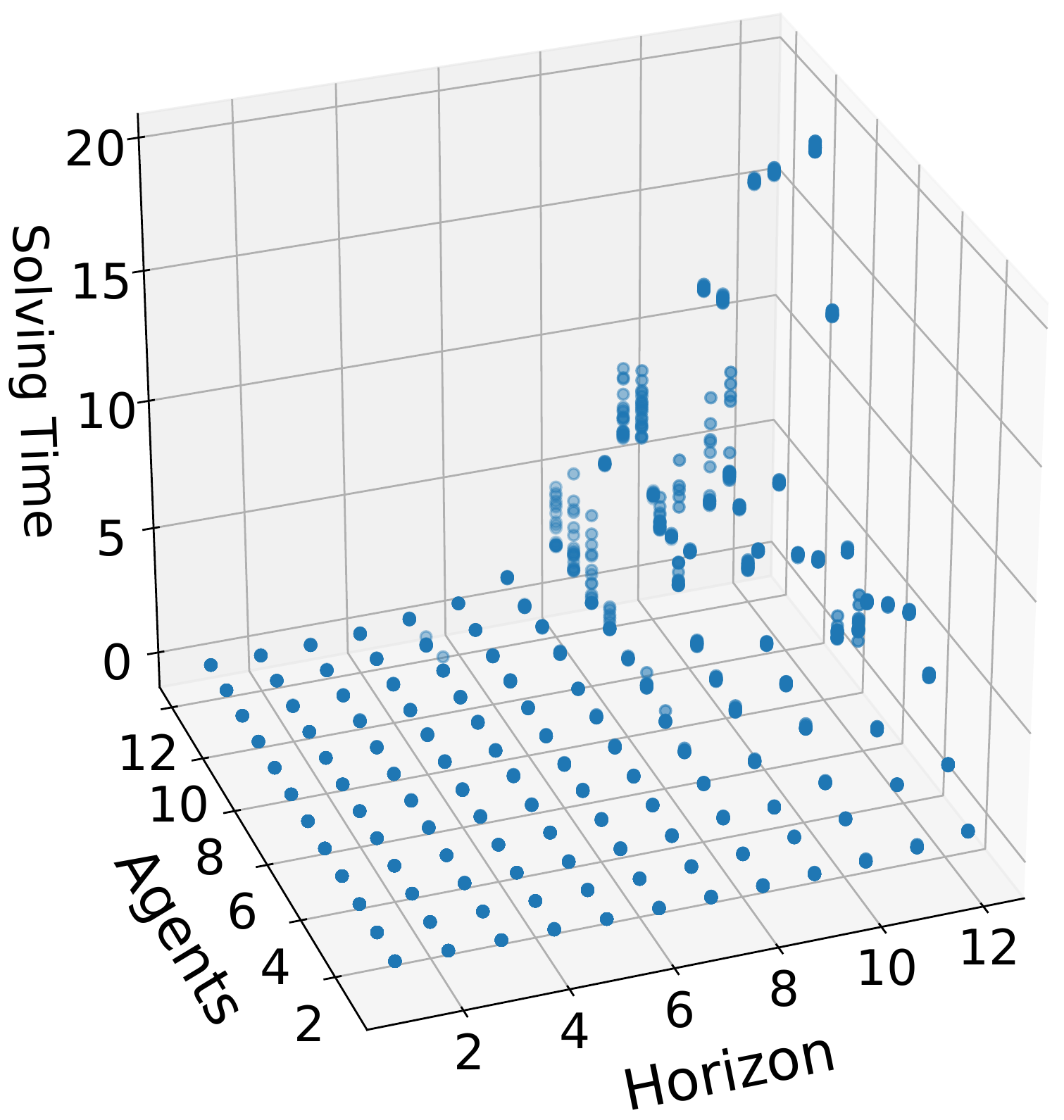}
		\caption{}
	\end{subfigure}
	\begin{subfigure}{.495\linewidth}
		\centering
		\includegraphics[width=\linewidth]{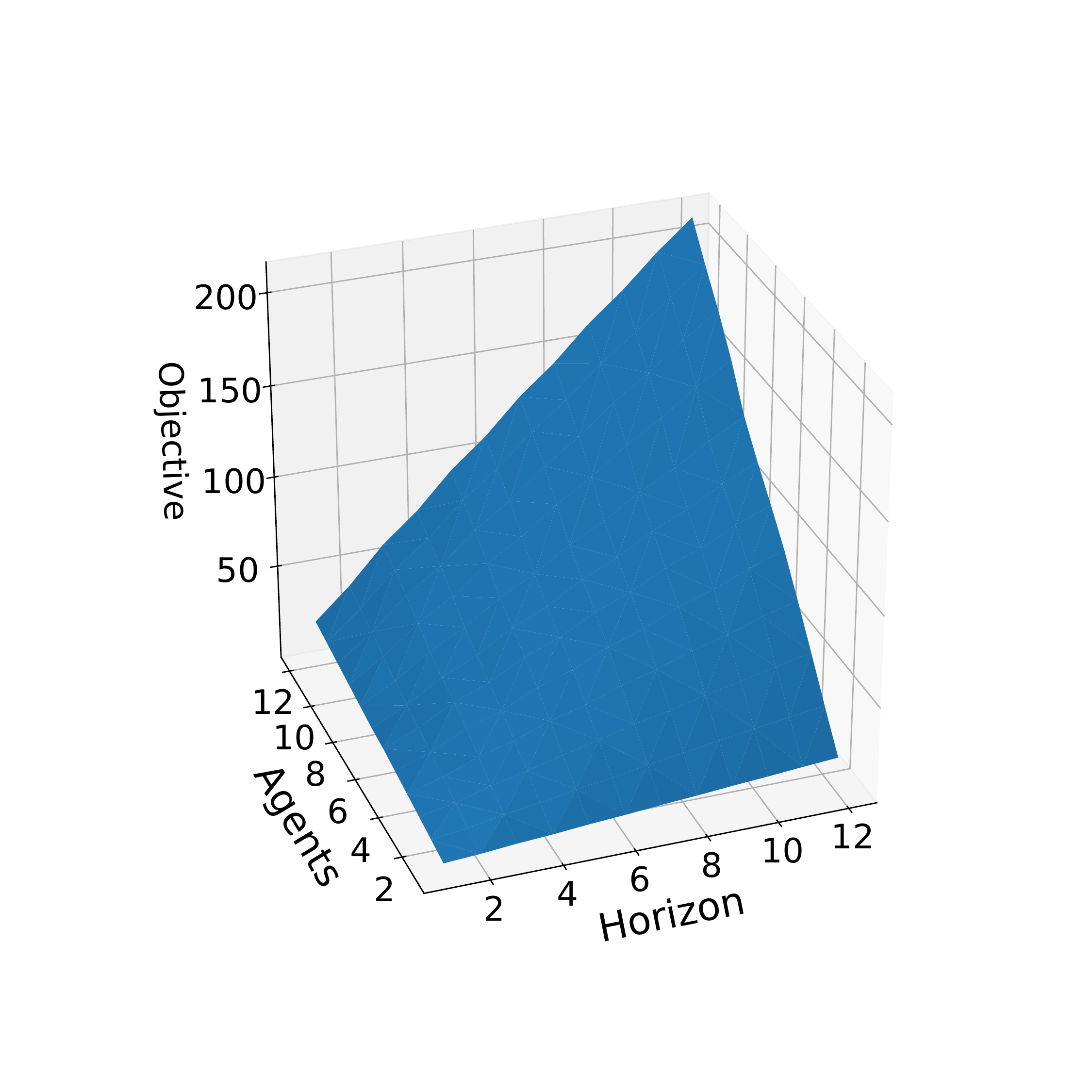}
		\caption{}
	\end{subfigure}
	\caption{Performance of the proposed MCC-SSP model on the multi-agent grid problem against the planning horizon and number of agents: (a) {\sf MCC-SSP-ILP} solving time, and (b) the objective value.}
   \label{fig:multiagent-grid}
\end{figure}

\subsection{Simulated Risk-aware Intelligent Intersection System}

\subsubsection{Intersection Throughput and Planning time}\hfill\\
\noindent
\textit{\bfseries Setup}:
CARLA simulator was used to generate PFTs for both AVs and HVs. For AVs, we executed a PID controller over nominal trajectories multiple times, each with slightly perturbed coefficients (uniformly chosen $P\in[0.4,1.2], I=0$, and $D\in[0.2,0.8]$)\footnote{Samples with  tracking error higher than 1 meter were excluded.}. The motivation behind this setup is that AVs from different vendors may have different controller setups. There are other factors, such as vehicle drift, weather, and road conditions, that could affect performance in real life. 
Similarly, we generate PFTs for HVs, with a 50\% chance of taking either action (e.g., go left or straight) when the traffic signal for HVs is green for the corresponding side (rotating every minute).  As an HV is accessing the intersection, the probability gradually approaches 100\% for the respective action. For comparison, we employ  the FCFS augmented with our risk detection approach as a benchmark planner. The simulations were repeated over 300-fold to reduce the uncertainties in the results.
The simulation setup consists of a two-lane four-sided (eight AVs) intersection (depicted in Figure \ref{fig:colide-points}), where the horizon duration is one second and a receding horizon is used for continuous planning. The horizon duration is the time between each planning horizon.
The trajectories are defined based on a PFT with 6Hz time-step.

\noindent
\textit{\bfseries Results}:
The first set of simulations, summarized in Table~\ref{tab:t1}, contrasts the performance (in terms of throughput) of FCFS and MCC-SSP under different risk thresholds and number of actions per agent. While the risk budget in MCC-SSP bounds the expected risk of the policy, the FCFS planner parses it as a bound for each action taken per agent and thus the expected risk in FCFS may exceed the bound for an MCC-SSP's single horizon. 
    As evident from Table~\ref{tab:t1}, MCC-SSP outperforms FCFS for any risk bound, and increasing the risk bound ($\Delta$) improves the performance since the system is taking more riskier actions. On the other hand, increasing the horizon doesn't necessarily improve the throughput. Even though a longer planning horizon provides a more optimal solution, the same risk bound gets distributed over the planning horizon. Thus, effectively, the single horizon case, for instance, has a higher risk threshold within, say, two receding horizons when compared to $h=2$. 
The observed planning time and scalability of the ILP formulation are reported in Table~\ref{tab:planning_time} as a function of the planning horizon and number of actions per agent. We presume that a planning time less than or equal to a single horizon duration (the horizon duration is one second) is reasonable for an intersection system; thus, for two actions we are able to use a horizon of four, and with three actions we are able to use a horizon of two.
We also present the negative impact of HVs on the throughput in Fig.~\ref{fig:HV-throughput}. As anticipated, with more human-driven vehicles, the throughput decreases.

\subsubsection{Intersection Objective Function}\hfill\\
\noindent
\textit{\bfseries Setup}:
To investigate the effectiveness of the objective function put forth in Sec.\ref{sec:objjj}, we performed a test case (portrayed in Fig.~\ref{fig:obj_exp1}) where an infinite number of AVs are arriving from the north and south. All AVs are traveling forward, similar to a highway scenario. On the other hand, the ego vehicle (green circle) is attempting to turn left, starting from the west and heading north.
We test two variants of the objective function, the first without the waiting time parameter ($\lambda_2=0$) and the second with the waiting time parameter ($\lambda_2=4$).

\vspace{4pt}
\noindent
\textit{\bfseries Results}:
In the case of the first objective, we observe that the ego vehicle never enters the intersection and waits indefinitely (depicted in Fig.~\ref{fig:obj_exp2}), which is undesirable. On the other hand, with the second objective, the ego vehicle enters the intersection (depicted in Fig.~\ref{fig:obj_exp3}) after the 10th horizon ($4\sqrt{10} > 15$ where 15 is the number of AVs waiting at the intersection).

\begin{figure}[ht]
	\begin{subfigure}{.327\linewidth}
		\centering
		\includegraphics[trim={15cm 25cm 20cm 25cm}, clip, width=\linewidth]{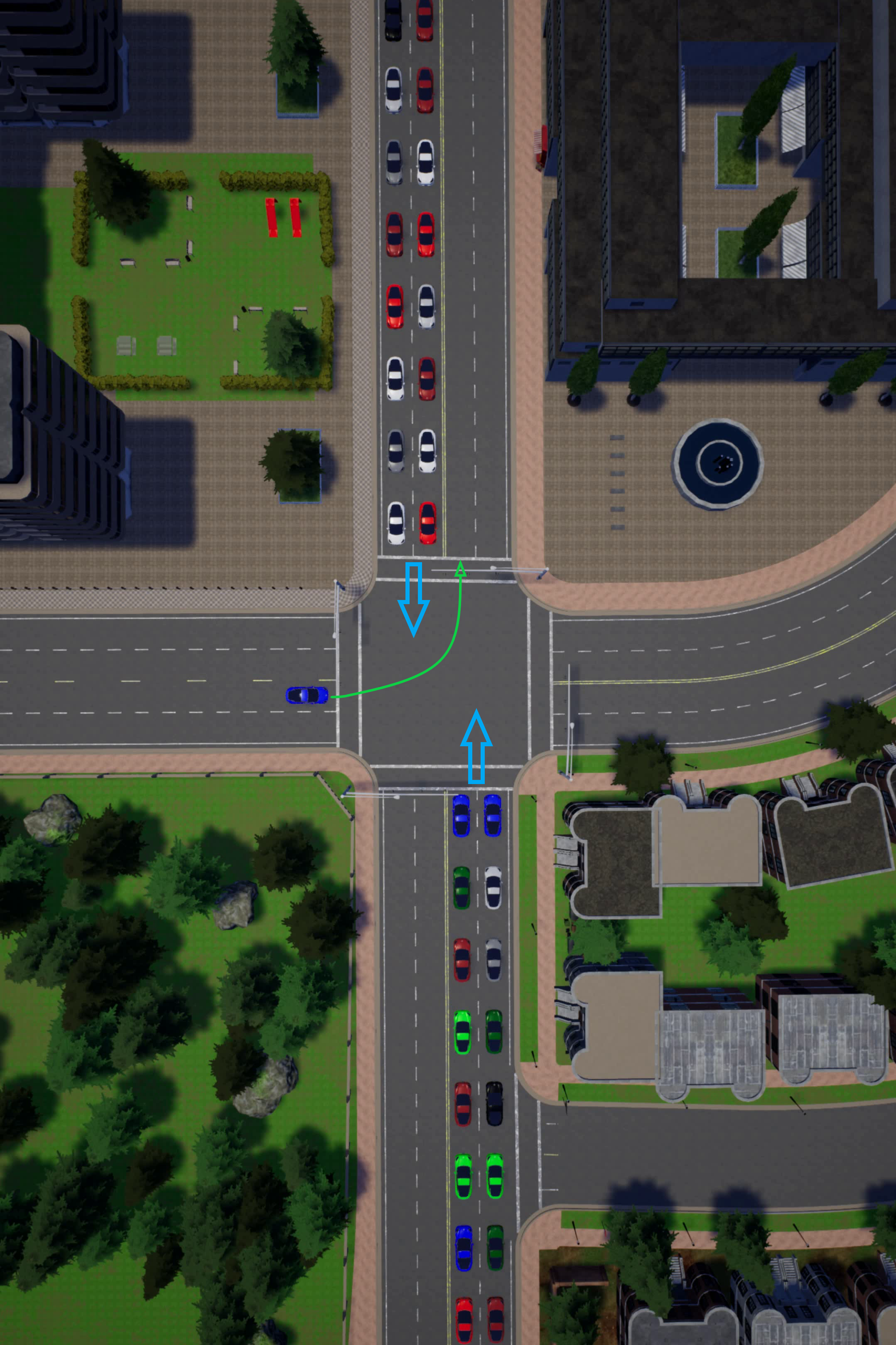}
		\caption{}
		   \label{fig:obj_exp1}
	\end{subfigure}
	\begin{subfigure}{.327\linewidth}
		\centering
		\includegraphics[trim={15cm 25cm 20cm 25cm},clip, width=\linewidth]{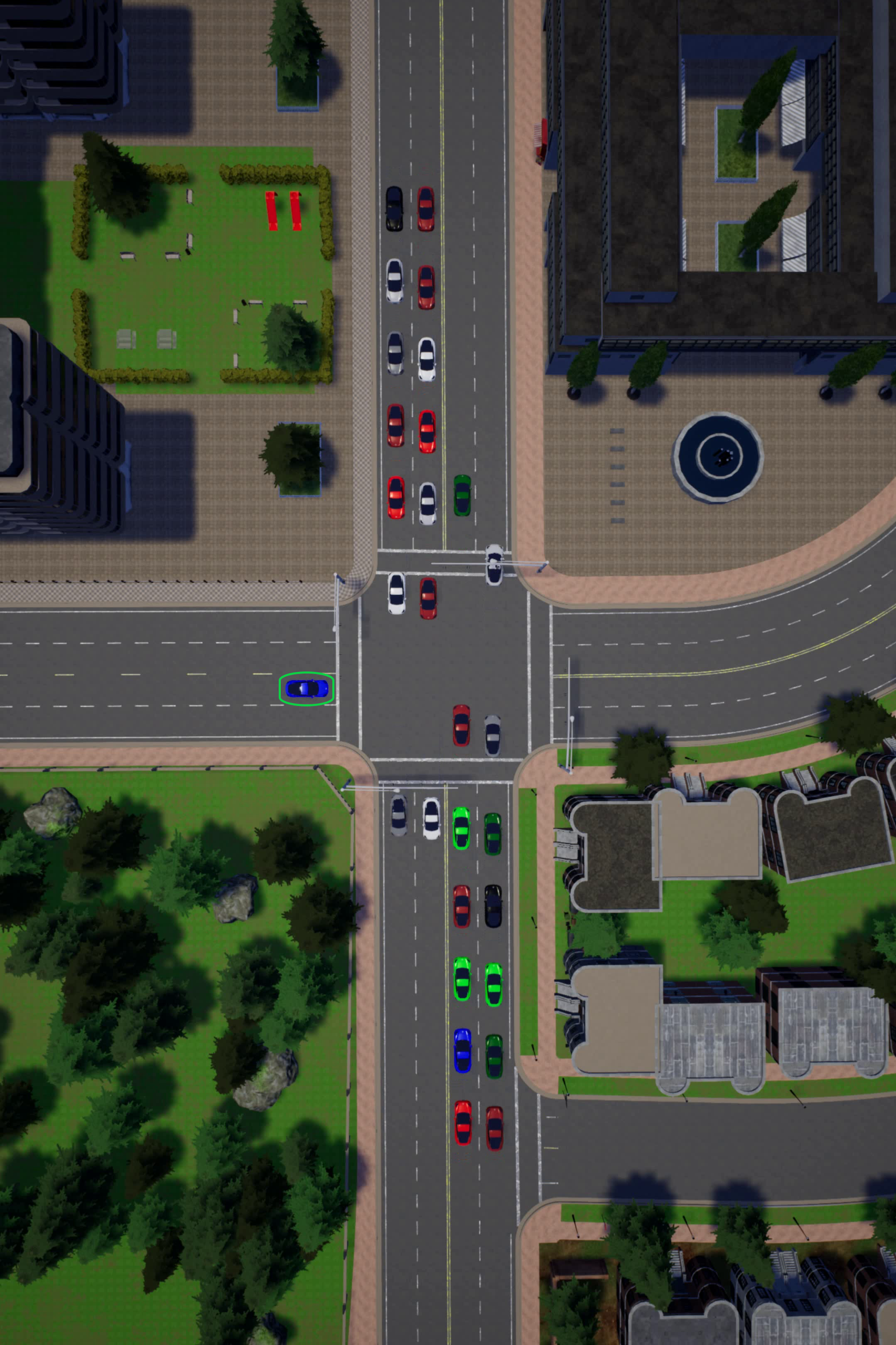}
		\caption{}
		   \label{fig:obj_exp2}
	\end{subfigure}
		\begin{subfigure}{.327\linewidth}
		\centering
		\includegraphics[trim={15cm 25cm 20cm 25cm},clip, width=\linewidth]{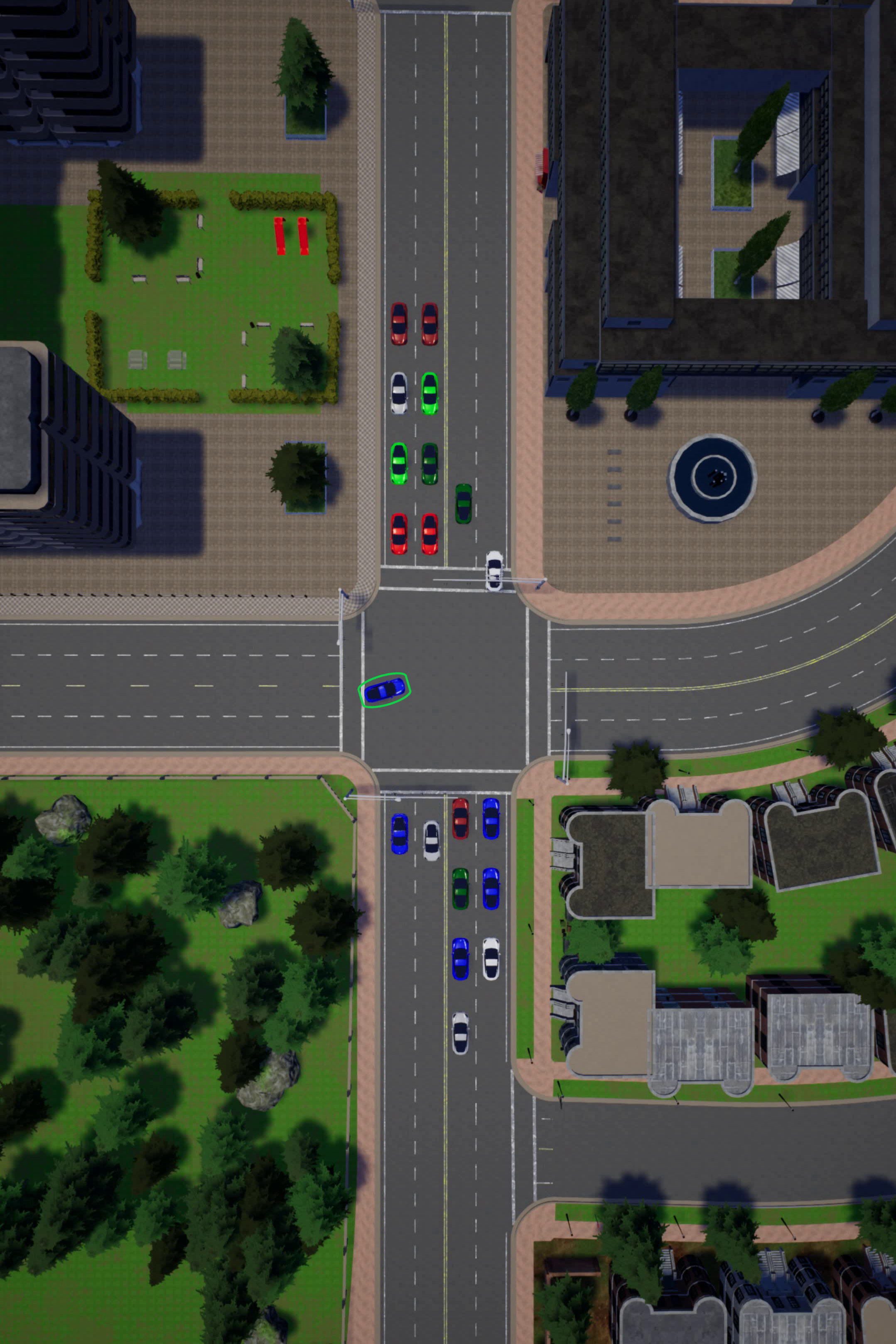}
		\caption{}
		   \label{fig:obj_exp3}
	\end{subfigure}
	\caption{Experimentation with the objective function: (a) The initial state and intention of the ego vehicle. (b) Running the planner with an objective function without waiting time. (c) Enacting the waiting time parameter in the objective function.}
	\label{fig:obj_exp}
\end{figure}

\vspace{-5pt}
\subsection{Trajectory Optimization}
To demonstrate the adopted trajectory optimization workflow, we ran our model on two intersections, one in Russia (Fig.~\ref{fig:intersection1}) with an asymmetrical number of the lanes and another intersection in the UAE (Fig.~\ref{fig:intersection2}) with multiple static obstacles in the center of the intersection. In the simulations, we utilized \textit{Crocoddyl} \cite{mastalli2020crocoddyl}, which is an open-source trajectory optimization software that implements the DDP algorithm.

The initial trajectories that were generated in the first loop of Alg.~\ref{alg:traj} were very close to each over and overlapped in some cases. However, after running the trajectory optimization with the dynamic obstacle cost (second loop of Alg.~\ref{alg:traj}), the trajectories diverged and no overlapping was observed.

  \begin{figure}[!h]
  	\begin{subfigure}{.515\columnwidth}
    \centering 
    \includegraphics[trim={1.5cm 0cm 0cm 0cm},clip,width=\textwidth]{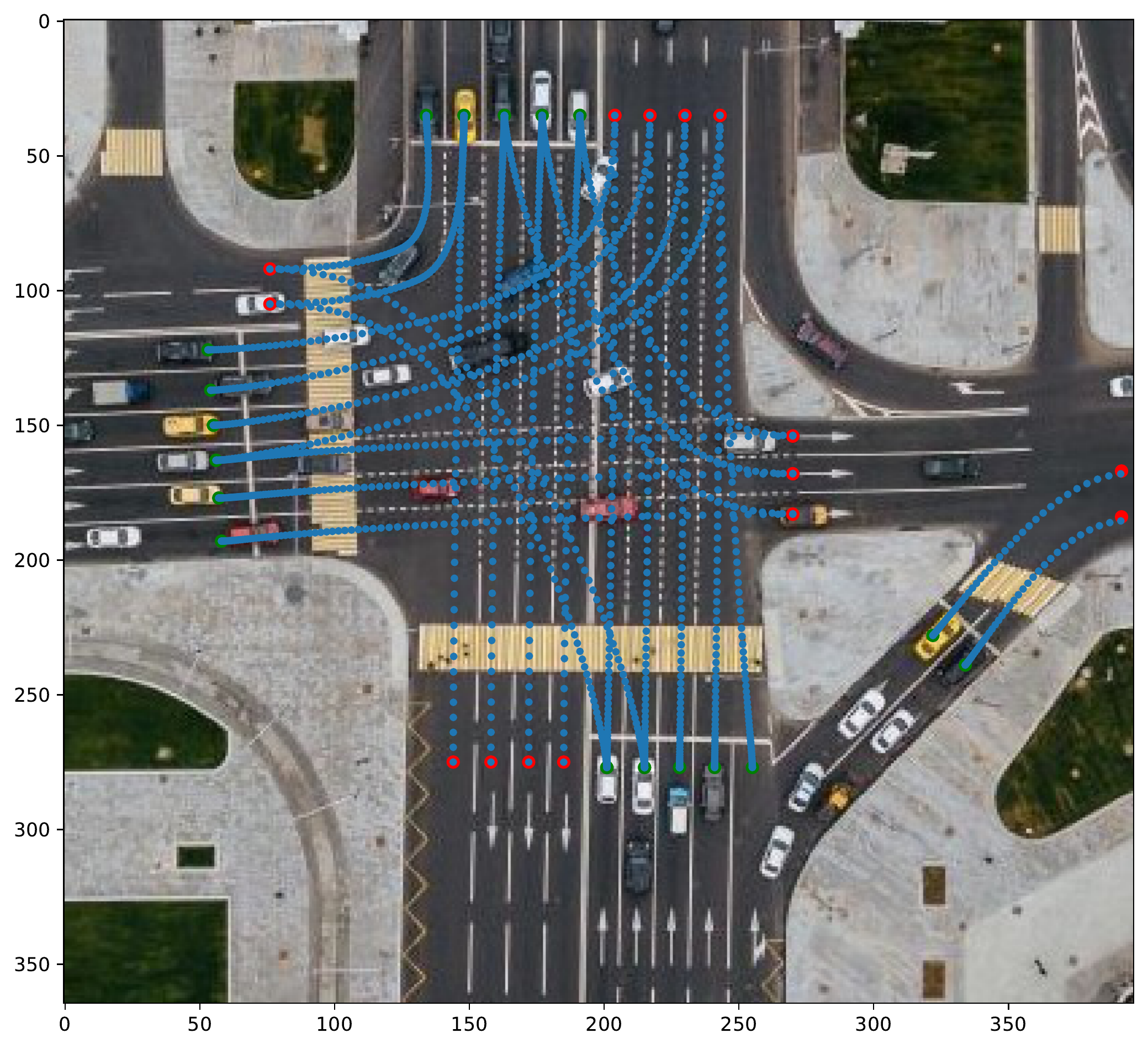}
 	\caption{}
 	\label{fig:intersection1}
 	\end{subfigure}
 	  \begin{subfigure}{.475\columnwidth}
    \centering 
    \includegraphics[width=\textwidth]{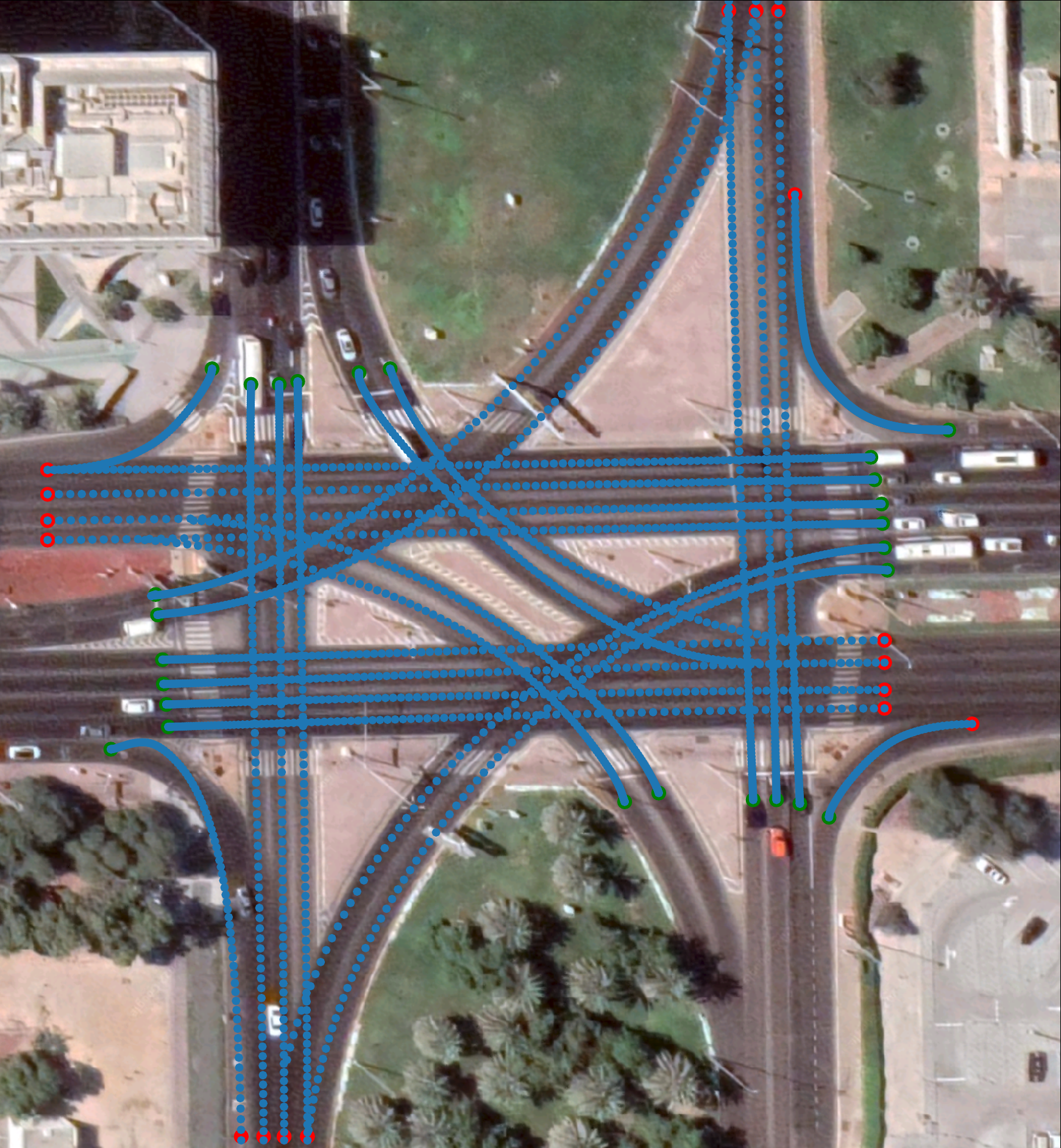}
 	\caption{}
 	\label{fig:intersection2}
 	\end{subfigure}
 	\caption{Output trajectories of the employed method for different intersections: (a) Tverskaya Zastava Square in Moscow, Russia. (b) Intersection near Khalifa University, Abu Dhabi, UAE.}
 	\end{figure}

\section{Conclusion} \label{sec:conclusion}
This work proposes a risk-aware Intelligent Intersection system modeled as a novel class of MCC-SSP, wherein agents interact at certain localized zones (collision points).
The system admits an adjustable risk tolerance parameter that allows to enforce the desired guarantee level on the probability of collisions despite the presence of perception and planning uncertainties. We introduce an exact integer linear programming formulation of the problem, featuring polynomial number variables and constraints when the number of agents per localized zone is small. The system is demonstrated in a realistic driving simulator that involves both AVs and HVs. As validated through simulations, the proposed system provides optimal plans that translate to higher throughput than the existing approaches. In future work, we target to design an approximation algorithm for the problem that runs in polynomial time and provides certifiable worst-case performance guarantees (i.e., approximation ratio).

\bibliographystyle{IEEEtran}
\bibliography{paper}

\appendix

\subsection{Proof of Lemma \ref{lem:er}}\label{lemproof}

The execution risk at $i$  can be written as

\footnotesize{
\begin{align}
&\textsc{Er}^j(s^i_k) =  1- \Pr\Big(\bigwedge_{t=k}^{h} \bigwedge_{v, v' \in \cX^i}\neg R^{j}(S^v_t, S^{v'}_t)  \mid S^i_k = s^i_k\Big). \notag \\[-8pt]
&= 1 - \bigg(\Pr\Big(\bigwedge_{\mathclap{t=k+1}}^{h} \neg R^{j}(S^i_t)  \mid S^i_k = s^i_k, \neg R^j(S^i_k) \Big)\Pr\Big( \neg R^{j}(S^i_k)  \mid S^i_k = s^i_k\Big) \notag  \bigg)\\[-3pt]
&= 1 - \Pr\Big(\bigwedge_{\mathclap{t=k+1}}^{h}\neg R^j(S^i_t) \mid S^i_k = s^i_k, \neg R^{j}(S^i_k) \Big) \prod_{\mathclap{v,v' \in \cX^i}}(1-r^{j}(s^v_k, s^{v'}_k)), \notag\\[-4pt]
&= 1 - \Pr\Big(\bigwedge_{t=k+1}^{h}\neg R^j(S^i_t) \mid S^i_k = s^i_k, \neg R^{j}(S^i_k) \Big) \overline r^{j}(s^i_k), \label{eq:er1}
\end{align}}
\normalsize{}

where $\neg R^j(S^i_k) := \bigwedge_{v, v' \in \cX^i}\neg R^{j}(S^v_k, S^{v'}_k)$ denote the event of being safe at time $k$ in interaction point $i$, and $\overline r^j(s^i):= \prod_{v,v' \in \cX^i}(1-r^{j}(s^v_k, s^{v'}_k))$ denote its corresponding probability.
The probability term in the last equation can be expanded, conditioned over subsequent states at time $k+1$,

\vspace{-5pt}
\begin{align}
&\sum_{s^i_{k+1}\in \cS^i} \Pr\Big(\bigwedge_{\mathclap{t=k+1}}^{h} \neg R^{j}(S^i_t)  \mid S^i_k = s^i_k, \neg R^{j}(S^i_k), S^i_{k+1} = s^i_{k+1} \Big) \notag\\[-6pt]
&\qquad\qquad\qquad\qquad\cdot \Pr\Big(S^i_{k+1} = s^i_{k+1} \mid S^i_k = s^i_k, \neg R^{j}(S^i_k) \Big) \notag \\
&= \sum_{s^i_{k+1}} \Pr\Big(\bigwedge_{t=k+1}^{h}\neg R^{j}(S^i_k)  \mid S^i_{k+1} = s^i_{k+1}\Big) \notag \\[-7pt]
&\qquad\qquad\qquad\qquad\qquad \cdot \Pr(S^i_{k+1} = s^i_{k+1} \mid S^i_k = s^i_k) \label{eq:ind'}  \\
& = \sum_{s^i_{k+1}} (1 - \textsc{Er}^j(s^i_{k+1}) )  T^i(s^i_k, \pi(s^i_k), s^i_{k+1}), \label{eq:er2}
\end{align}
where Eqn.~\raf{eq:ind'} follows from the independence between $\big(\bigwedge_{t=k+1}^{h} \neg R^j(S^i_t) \mid S^i_{k+1}= s^i_{k+1}\big)$ and $(S^i_k = s^i_k\wedge \neg R^j(S^i_k))$, and between $(S_{k+1} = s_{k+1} \mid S_k = s_k)$ and $\neg R^j(S^i_k)$.
Combining Eqns.~\raf{eq:er1},\raf{eq:er2} obtains $\textsc{Er}^j(s^i_k)$ as \vspace{-5pt}

\begin{align}
&=  1 - \overline r^{j}(s^i_k) \sum_{\mathclap{s^i_{k+1}}} (1-\textsc{Er}^j(s^i_{k+1}) )   T^i(s^i_k, \pi(s^i_k), s^i_{k+1}), \notag \\
&= 1 +\overline r^{j}(s^i_k)  \Big[\sum_{s^i_{k+1}} \textsc{Er}^j(s^i_{k+1}) T^i(s^i_k, \pi(s^i_k), s^i_{k+1})  \notag \\
& \qquad - \sum_{s^i_{k+1}}   T^i(s^i_k, \pi(s^i_k), s^i_{k+1}) \Big] \notag \\
&= 1 +\overline r^j(s^i_k) \Big[ \sum_{s^i_{k+1}} \textsc{Er}^j(s^i_{k+1})T^i(s^i_k, \pi(s^i_k), s^i_{k+1}) - 1\Big] \notag \\
&=1 - \overline r^j(s^i_k) +\overline r^{j}(s^i_k)  \sum_{s^i_{k+1}} \textsc{Er}^j(s^i_{k+1}) T^i(s^i_k, \pi(s^i_k), s^i_{k+1}) \notag \\
&= 1-\overline r^j(s^i_k) +  \sum_{s^i_{k+1}} \textsc{Er}^j(s^i_{k+1}) \widetilde T^{i,j}(s_k, \pi(s_k), s_{k+1}), \label{eq:er3}
\end{align}
where  $\widetilde T^{i,j}(s^i_k, \pi(s^i_k), s^i_{k+1})=  T^i(s^i_k, \pi(s^i_k),
s^i_{k+1}) \overline r^{j}(s^i_k)$.
For a stochastic policy $\pi$, Eqn.~\raf{eq:er3} can be easily written as claimed in the lemma.

\subsection{Proof of Theorem~\ref{lem1} }\label{thmproof}
\begin{proof}
We can rewrite the execution risk from Lemma \ref{lem:er} as $\textsc{Er}'^j(s^i_k) =$
\begin{equation}
 \left\{\begin{array}{l}
\widetilde r^j(s_0) + \sum_{s^i_{k+1}\in \cS^i}\sum_{a^i \in \cA^i} ( \textsc{Er}'^j(s^i_{k+1}) + \widetilde r^j(s^i_{k+1}) ) \notag \\
 \hspace{10mm} \cdot \pi(s^i_k, a^i) \widetilde T^{i,j}(s^i_k, a^i, s^i_{k+1}) \hspace{7mm} \text{if } k = 0, \\
\sum_{s^i_{k+1}\in \cS^i}\sum_{a^i \in \cA^i} ( \textsc{Er}'^j(s^i_{k+1}) + \widetilde r^j(s^i_{k+1}) ) \notag\\
 \hspace{10mm} \cdot \pi(s^i_k, a^i) \widetilde T^{i,j}(s^i_k, a^i, s^i_{k+1})\hspace{7mm} \text{if } k = 1,...,h-1,\\
\hspace{23mm} 0 \hspace{29mm} \text{if } k=h.
\end{array}\right. \label{eq:er0}
\end{equation}
where $\textsc{Er}^j(s^i_k) = \textsc{Er}'^j(s^i_k)  + \widetilde r^j(s^i_k)$, and $\textsc{Er}^j(s_0) = \textsc{Er}'^j(s_0)$.
Based on the  flow equations \raf{conf1},\raf{conf2}, define a policy 
\begin{equation}
\pi(s^i_k,a^i) := \left\{ \begin{array}{l l}
 x^{i,j}_{s,0,a} \hspace{30pt}\vspace{10pt},& \text{if } s^i_k = s_0 \\
 \frac{x_{s,k,a}^{i,j}}{\sum_{a' \in \cA^i}  x_{s,k,a'}^{i,j}}, & \text{otherwise.}
\end{array}\right.
\end{equation}
Note that the policy is a valid probability distribution. Thus, we rewrite Eq.~\raf{conf1},\raf{conf2} by
\begin{align}
    & x^{i,j}_{s,k,a} = \pi(s^i_k,a^i) \sum_{s^i_{k-1}\in\cS^i}\sum_{a'\in \cA^i} x^{i,j}_{s,k-1,a'} \widetilde T^{i,j}(s^i_{k-1},a',s^i_k) \notag \\
    & x^{i,j}_{s,0,a} = \pi(s_0,a^i). \label{eq:pix}
\end{align}
Next, we proof by induction the following statement
\begin{align}
&\textsc{Er}^j(s_0) = \sum_{i \in \cN} \widetilde r^j(s^i_0) \notag \\ & \hspace{5mm} + \sum_{k = 1}^{h'} \sum_{i\in\cN~~} \sum_{\mathclap{\substack{s^i_{k-1}\in \cS^i \\a^i \in \cA^i, s^i_k\in \cS^i }}} \widetilde r^j(s_{k}) x^{i,j}_{s,k-1,a} \widetilde T^{i,j}(s^i_{k-1}, a^i, s^i_{k}) \notag \\
&\hspace{5mm} +  \sum_{i\in\cN~~} \sum_{\mathclap{\substack{s^i_{{h'}-1}\in \cS^i \\a^i \in \cA^i, s^i_{h'}\in \cS^i }}}  \textsc{Er}'^j(s^i_{h'})  x^{i,j}_{s,h'-1,a} \widetilde T^{i,j}(s^i_{h'-1}, a^i, s^i_{h'}), \label{eq:ind}
\end{align}
Note that when $h' = h$, by Eqn.~\raf{eq:er0}, the last term of the above equation is zero, which is equivalent to the lemma's claim.
We consider the initial case with $h'=1$. From Eqn.~\raf{eq:er0} and $\textsc{ER}^j(s_0) = \textsc{ER}^{'j}(s_0)$, we obtain
\begin{align*}
    &\textsc{Er}^j(s_0) =\sum_{i \in \cN} \widetilde r^j(s^i_0) \notag \\ & \hspace{6mm} + \sum_{i\in\cN~~} \sum_{\mathclap{\substack{s^i_{1}\in \cS^i \\ a^i \in \cA^i}}} (\widetilde r^j(s^i_{1}) + \textsc{Er}'^j(s^i_{1})  ) \pi(s^i_0, a^i) \widetilde T^{i,j}(s_0^i, a^i, s^i_{1}) \\ 
     &=\sum_{i \in \cN} \widetilde r^j(s^i_0) + \sum_{i\in\cN~~}\sum_{\mathclap{\substack{s^i_{1}\in \cS^i \\ a^i \in \cA^i}}}\widetilde r^j(s^i_{1}) x^{i,j}_{s,0,a} \widetilde T^{i,j}(s_0^i, a^i, s^i_{1}) \notag \\ & \hspace{20mm} + \sum_{i\in\cN~~}\sum_{\mathclap{\substack{s^i_{1}\in \cS^i \\ a^i \in \cA^i}}} \textsc{Er}'^j(s^i_{1}) x^{i,j}_{s,0,a} \widetilde T^{i,j}(s_0^i, a^i, s^i_{1}),
\end{align*}
where $s_{h'-1} = s_0$ is a known state.
For the inductive step, we assume  Eqn.~\raf{eq:ind} holds up to $h'=t$, we proof the statement for $h'=t+1$. Expanding $\textsc{Er}^{'j}(s_t)$ using Eqn.~\raf{eq:er0} obtains
\begin{align}
 &\textsc{Er}^j(s_0) =\sum_{i \in \cN} \widetilde r^j(s^i_0) \notag \\ & \hspace{5mm} + \sum_{k=1}^t \sum_{i\in\cN~~} \sum_{\mathclap{\substack{s^i_{k-1}\in \cS^i \\a^i \in \cA^i, s^i_k\in \cS^i }}} \Big(\widetilde r^j(s^i_{k}) x^{i,j}_{s,k-1,a} \widetilde T^{i,j}(s^i_{k-1}, a^i, s^i_{k}) \Big) \notag \\
 &\hspace{5mm}  + \sum_{i\in\cN~~} \sum_{\mathclap{\substack{s^i_{t-1}\in \cS^i \\a^i \in \cA^i, s^i_t\in \cS^i }}}
 \big( \textsc{Er}'^j(s^i_{t}) x^{i,j}_{s,t-1,a} \widetilde T^{i,j}(s^i_{t-1},a^i,s^i_t) \big) \notag
 \end{align}
    \begin{align}
     &= \sum_{i \in \cN} \widetilde r^j(s^i_0) \notag \\ & \hspace{5mm}+ \sum_{k=1}^t \sum_{i\in\cN~~} \sum_{\mathclap{\substack{s^i_{k-1}\in \cS^i \\a^i \in \cA^i, s^i_k\in \cS^i }}} \Big(\widetilde r^j(s^i_{k}) x^{i,j}_{s,k-1,a} \widetilde T^{i,j}(s^i_{k-1}, a^i, s^i_{k}) \Big) \notag\\
    & \hspace{5mm} + \sum_{i\in\cN~~} \sum_{\mathclap{\substack{s^i_{t}\in \cS^i \\{a'}^{i} \in \cA^i, s^i_{t+1}\in \cS^i }}}
 \Big( \big( \widetilde r^j(s^i_{t+1}) + \textsc{Er}'^j(s^i_{t+1})  \big)\pi(s^i_t,{a'}^{i}) \notag \\ & \hspace{5mm} \cdot  \widetilde T^{i,j}(s^i_t,a{i'},s^i_{t+1})  \sum_{s^i_{t-1}\in \cS^i}\sum_{a^i \in \cA^i} \big(x^{i,j}_{s,t-1,a}   \widetilde T^{i,j}(s^i_{t-1}, a^i, s^i_{t} \big)  \Big) \notag
    \end{align}
    \begin{align}
     &=\sum_{i \in \cN} \widetilde r^j(s^i_0) \notag\\& +  \sum_{k=1}^t \sum_{i\in\cN~~} \sum_{\mathclap{\substack{s^i_{k-1}\in \cS^i \\a^i \in \cA^i, s^i_k\in \cS^i }}} \Big(\widetilde r^j(s^i_{k}) x^{i,j}_{s,k-1,a} \widetilde T^{i,j}(s^i_{k-1}, a^i, s^i_{k}) \Big) \notag \\
     &  + \sum_{i\in\cN~~} \sum_{\mathclap{\substack{s^i_{t}\in \cS^i \\a^i \in \cA^i, s^i_{t-1}\in \cS^i }}}
 \Big( \big( \widetilde r^j(s^i_{t+1}) + \textsc{Er}'^j(s^i_{t+1})  \big) \notag\\&\hspace{40mm} \cdot x^{i,j}_{s,t,a} \widetilde T^{i,j}(s^i_t,a^i,s^i_{t+1}) \Big) \label{eq:simpx}  \end{align}
where Eqn.~\raf{eq:simpx}  follows by substituting $\pi(s^i_t,a')$, using Eqn.~\raf{eq:pix}, by
$$ \pi(s^i_t,{a'}^{i})  = \frac{ x^{i,j}_{s,t,a'}} {\sum_{s^i_{t-1}\in\cS^i}\sum_{a^i\in \cA^i} x^{i,j}_{s,t-1,a} \widetilde T^{i,j}(s^i_{t-1},a^i,s^i_t)}.$$
Rewriting Eqn.~\raf{eq:simpx}, we obtain
\begin{align}
&\textsc{Er}^j(s_0)=\sum_{i \in \cN} \widetilde r^j(s^i_0)) \notag\\&\hspace{2mm} + \sum_{k=1}^{t+1} \sum_{i\in\cN~~} \sum_{\mathclap{\substack{s^i_{k-1}\in \cS^i \\a^i \in \cA^i, s^i_k\in \cS^i }}} \widetilde r^j(s^i_{k}) x^{i,j}_{s,k-1,a} \widetilde T^{i,j}(s^i_{k-1}, a^i, s^i_{k})\notag\\
& \hspace{2mm} + \sum_{i\in\cN~~} \sum_{\mathclap{\substack{s^i_{t}\in \cS^i \\a^i \in \cA^i, s^i_{t-1}\in \cS^i }}}
 \textsc{Er}'^j(s^i_{t+1})   x^{i,j}_{s,t,a} \widetilde T^{i,j}(s^i_t,a^i,s^i_{t+1}),
\end{align}
\end{proof}

\end{document}